\newcommand{\Pxy}{P_{X,Y}}
\newcommand{\calA}{\mathcal{A}}
\newcommand{\calB}{\mathcal{B}}
\newcommand{\calD}{\mathcal{D}}
\newcommand{\calH}{\mathcal{H}}
\newcommand{\calM}{\mathcal{M}}
\newcommand{\calP}{\mathcal{P}}
\newcommand{\calR}{\mathcal{R}}
\newcommand{\calS}{\mathcal{S}}
\newcommand{\calT}{\mathcal{T}}
\newcommand{\calX}{\mathcal{X}}
\newcommand{\calY}{\mathcal{Y}}
\newcommand{\bx}{\mathbf{x}}
\newcommand{\by}{\mathbf{y}}
\newcommand{\bg}{\mathbf{g}}
\newcommand{\bw}{\mathbf{w}}
\newcommand{\bz}{\mathbf{z}}
\newcommand{\bp}{\mathbf{p}}
\newcommand{\bq}{\mathbf{q}}
\newcommand{\be}{\mathbf{e}}
\newcommand{\bW}{\mathbf{W}}
\newcommand{\bP}{\mathbf{P}}
\newcommand{\bZ}{\mathbf{Z}}
\newcommand{\R}{\mathbb{R}}
\newcommand{\E}{\mathbb{E}}
\newtheorem{defn}{Definition}
\newtheorem{prop}{Proposition}
\DeclareMathOperator*{\argmax}{arg\,max}
\DeclareMathOperator*{\argmin}{arg\,min}
\newcommand{\vol}{\mathsf{vol}}
\newcommand{\rad}{\mathsf{rad}}
\newcommand{\diam}{\mathsf{diam}}
\newcommand{\diag}{\mathsf{diag}}
\newcommand{\convexhull}{\mathsf{convex\\ hull}}
\begin{document}

\title{Rashomon Capacity: A Metric for\\Predictive Multiplicity in Classification}
\date{}
\author{
    Hsiang~Hsu,
    and~Flavio~du~Pin~Calmon\thanks{H. Hsu and F. P. Calmon are with the John A. Paulson School of Engineering and Applied Sciences, Harvard University, Boston, MA 02134. Emails: \texttt{hsianghsu@g.harvard.edu, flavio@seas.harvard.edu}.This material is based upon work supported by the National Science Foundation under grants CAREER 1845852, IIS 1926925, and FAI 2040880, and by Meta Ph.D. fellowship.}
}

\maketitle

\begin{abstract}
Predictive multiplicity occurs when classification models with statistically indistinguishable performances assign conflicting predictions to individual samples. When used for decision-making in applications of consequence (e.g., lending, education, criminal justice), models developed without regard for predictive multiplicity may result in unjustified and arbitrary decisions for specific individuals. We introduce a new metric, called Rashomon Capacity, to measure predictive multiplicity in probabilistic classification. Prior metrics for predictive multiplicity focus on classifiers that output thresholded (i.e., 0-1) predicted classes. In contrast, Rashomon Capacity applies to probabilistic classifiers, capturing more nuanced score variations for individual samples. We provide a rigorous derivation for Rashomon Capacity, argue its intuitive appeal, and demonstrate how to estimate it in practice. We show that Rashomon Capacity yields principled strategies for disclosing conflicting models to stakeholders. Our numerical experiments illustrate how Rashomon Capacity captures predictive multiplicity in various datasets and learning models, including neural networks. The tools introduced in this paper can help data scientists measure and report predictive multiplicity prior to model deployment.
\end{abstract}

\textbf{Keywords}: Rashomon effect, Rashomon set, predictive multiplicity, channel capacity, Rashomon capacity, probabilistic classifiers.

\section{Introduction}\label{sec:intro}
\emph{Rashomon effect}, introduced by \citet{breiman2001statistical}, describes the phenomenon where a multitude of distinct predictive models achieve similar training or test loss.
Breiman reported observing the Rashomon effect in several model classes, including linear regression, decision trees, and small neural networks.
In a foresighted experiment, Breiman noted that, when retraining a  neural network 100 times on  three-dimensional data with different random initializations, he ``\emph{found 32 distinct minima, each of which gave a different picture, and having about equal test set error}'' \citep[Section~8]{breiman2001statistical}.
The set of almost-equally performing models for a given learning problem is called the \emph{Rashomon set} \citep{fisher2019all, semenova2019study}.

We focus on a facet of the Rashomon effect in classification problems called \emph{predictive multiplicity}. 
Predictive multiplicity occurs when competing models in the Rashomon set assign conflicting predictions to individual samples \citep{marx2020predictive}. 
Fig.~\ref{fig:three-examples} presents an updated version of Breiman's neural network experiment and illustrates predictive multiplicity in three classification tasks with different data domains and neural network architectures. 
Here, models that achieve statistically-indistinguishable performance on a test set assign wildly different predictions to an input sample. 
If predictive multiplicity is not accounted for, the output for this sample may  ultimately depend on arbitrary choices made during training (e.g., parameter initialization).

Predictive multiplicity captures the potential individual-level harm introduced by an arbitrary choice of a single model in the Rashomon set.
When such a model is used to support automated decision-making in sectors dominated by a few companies or Government---labeled \emph{Algorithmic Leviathans} in \citet[Section 3]{creel2021algorithmic}---predictive multiplicity can lead to unjustified and systemic exclusion of individuals from critical opportunities. 
For example, an algorithm used for lending may deny a loan to a specific applicant. However, during model development, there may have been a competing model which performs equally well on average, yet would have approved the loan for this individual. 
As another example, Governments are increasingly turning to algorithms for grading exams that grant access to higher-level education (see, e.g., UK \citep{smith2020algorithmic} and Brazil \citep{enem2020}). 
Here, again, accounting for predictive multiplicity is critical: an arbitrary choice of a single model in the Rashomon set may lead to an unwarranted restriction of educational opportunities to an individual student.

\begin{figure}[!tb]
     \centering
     \begin{subfigure}[b]{0.32\textwidth}
         \centering
         \includegraphics[width=\textwidth]{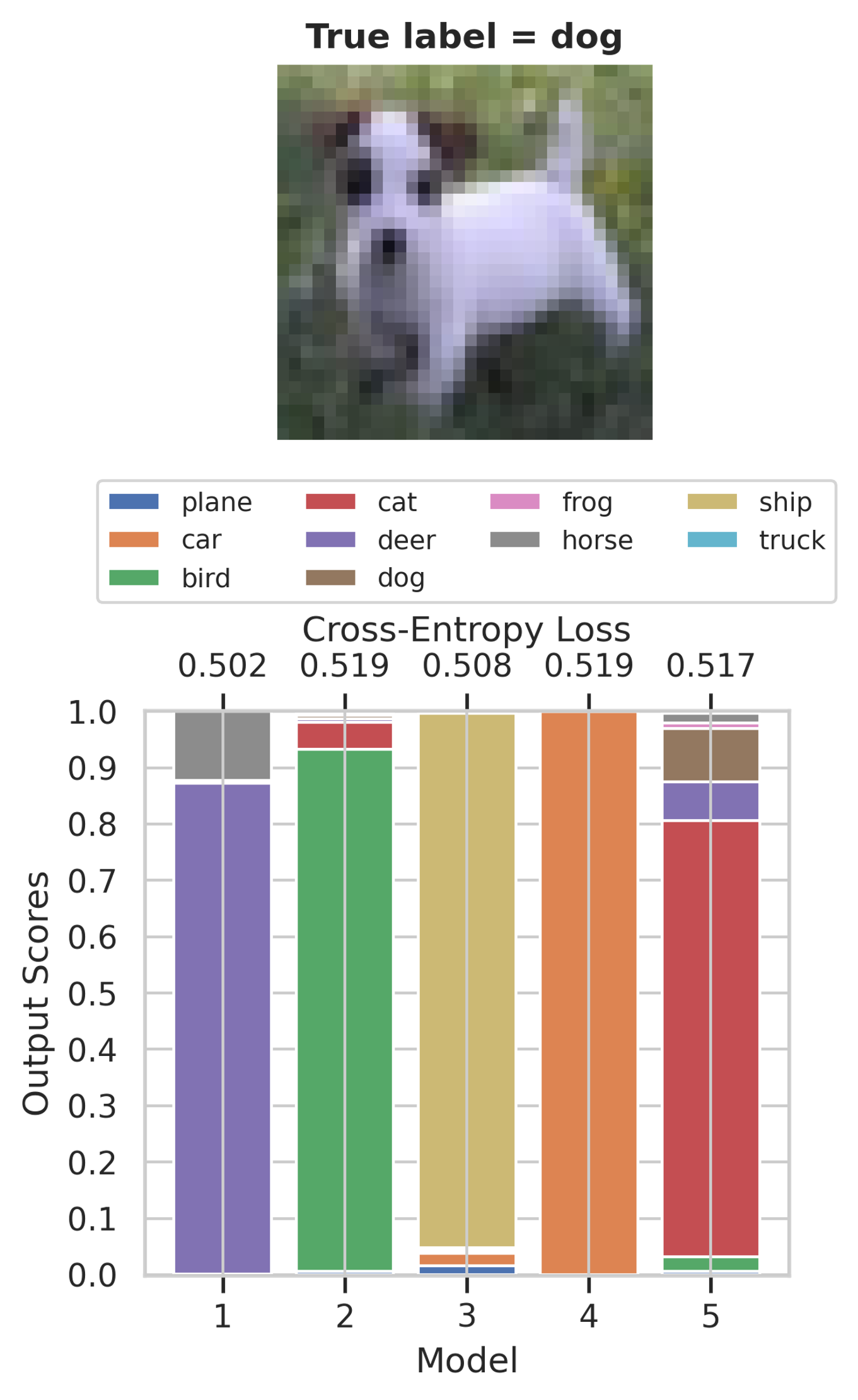}
         \caption{\small CIFAR-10 dataset}
         \label{fig:intro-cifar10}
     \end{subfigure}
     \begin{subfigure}[b]{0.3\textwidth}
         \centering
         \includegraphics[width=\textwidth]{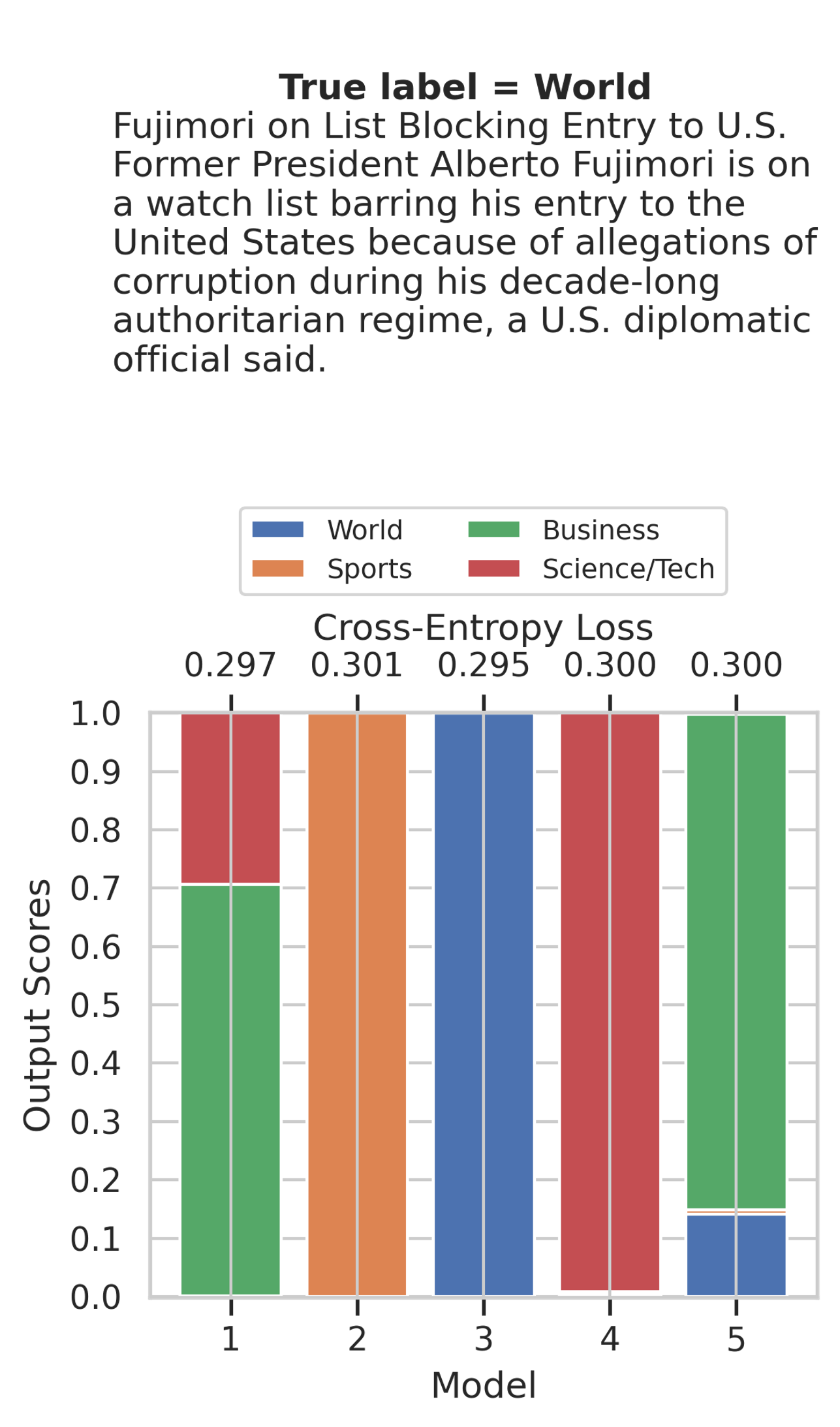}
         \caption{\small AG News dataset}
         \label{fig:intro-agnews}
     \end{subfigure}
     \begin{subfigure}[b]{0.305\textwidth}
         \centering
         \includegraphics[width=\textwidth]{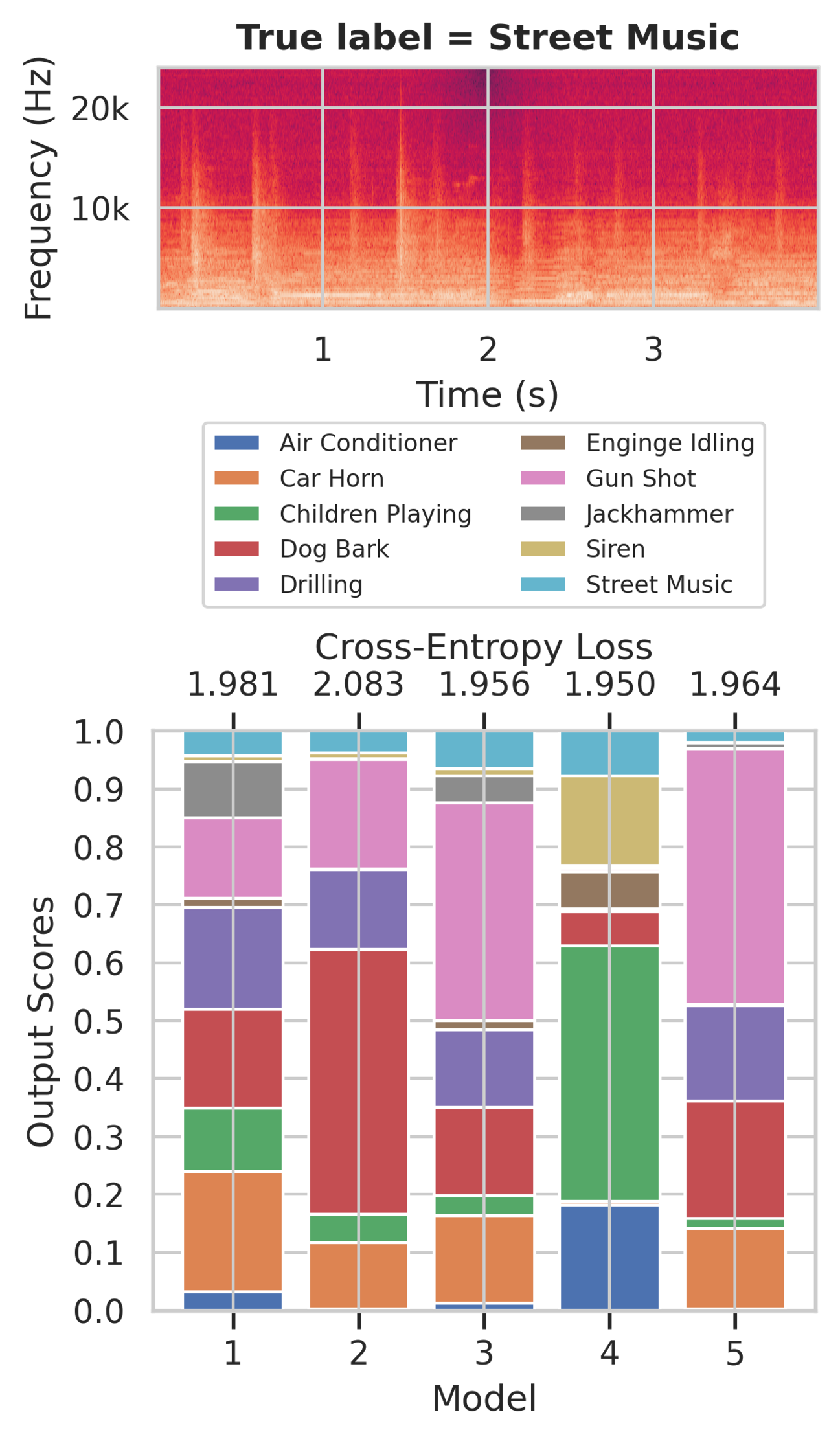}
         \caption{\small UrbanSound8k dataset.}
         \label{fig:intro-score}
     \end{subfigure}
        \caption{The scores (bottom) of a sample (top) generated by competing models. Predictive multiplicity occurs on different  data domains and learning models, including an image dataset (CIFAR-10 \citep{krizhevsky2009learning}) trained with VGG16 \citep{simonyan2014very}, a natural language dataset (AG News \citep{zhang2015character}) trained with a simple neural networks after tokenization, and an audio dataset (UrbanSound8k \citep{salamon2014dataset}) trained with LSTM \citep{greff2016lstm}.}
        \label{fig:three-examples}
\end{figure}

We introduce new methods for measuring and reporting predictive multiplicity in probabilistic classification.
First, we postulate several properties that a predictive multiplicity metric must satisfy to simplify its interpretation by stakeholders. 
We then provide a new predictive multiplicity metric called \emph{Rashomon Capacity}. 
Rashomon Capacity quantifies score variations among  models in the Rashomon set for a given input sample. 
Unlike prior metrics restricted to thresholded scores (i.e., decisions),  Rashomon Capacity can be applied to  probabilistic classifiers that output a probability distribution over a set of classes (e.g.,  neural networks with soft-max output layers). 
Communicating such score disagreements helps stakeholders understand whether a given prediction is arbitrary, e.g., depending on randomness during training rather than  patterns in the data.

We show that Rashomon Capacity of an input sample can be entirely captured by at most $c$ models in the Rashomon set, where $c$ is the number of predicted classes, \emph{regardless of the size of the Rashomon set}.
Remarkably, the computation of Rashomon Capacity also sheds light on a strategy for resolving predictive multiplicity. 
Instead of releasing a single model, we provide a greedy algorithm for identifying \emph{a subset of models} in the Rashomon set that captures most of the score variations across a dataset. 
These models can be communicated to a stakeholder, empowering them to decide how to resolve conflicting scores via, e.g., randomization \citep{creel2021algorithmic} and bagging \citep{breiman2001statistical}. 
In summary, our main contributions include: 
\begin{enumerate}
    \item We postulate desirable properties that \emph{any} predictive multiplicity metric must satisfy. These properties motivate our definition of Rashomon Capacity and provide guidelines for the creation of new multiplicity metrics in future research. We also outline computational challenges in estimating predictive multiplicity in practice.
    \item We introduce a new score-based metric for \emph{quantifying} predictive multiplicity  called Rashomon Capacity. 
    Rashomon Capacity can be applied to measure score variations across competing classifiers that output either  raw or thresholded (i.e. 0-1) scores.
    \item We describe a methodology for \emph{reporting} predictive multiplicity in probabilistic classification using Rashomon Capacity, with examples on different datasets and models. We advocate that predictive multiplicity must be reported to stakeholders in, for example, model cards \citep{mitchell2019model}.
    \item We propose a procedure for \emph{resolving}  predictive multiplicity in probabilistic classifiers. Even though the Rashomon set may span a large (potentially uncountable) number of models, we show that the score variation for a sample is fully captured by a small subset of models in the Rashomon set. Communicating these predictions to stakeholders can empower them to decide how to resolve predictive multiplicity. 
\end{enumerate}

Omitted proofs, additional explanations and discussions, details on experiment setups and training, and additional experiments are included in the Appendix.
Code to reproduce our experiments is available at \href{https://github.com/HsiangHsu/rashomon-capacity}{https://github.com/HsiangHsu/rashomon-capacity}.

\section{Background and related work}
\label{sec:background}
\textbf{Notation.} We consider a dataset $\calD = \{(\bx_i, \by_i)\}_{i=1}^n$, e.g., a training or test set, for a classification task with $c$ classes/labels.
Each sample pair $(\bx_i, \by_i)$ is drawn i.i.d. from $\Pxy$ with support $\calX\times\Delta_c$. 
Here, $\Delta_c \triangleq\{(r_1, \cdots, r_c)\in[0, 1]^c; \sum_{i=1}^c r_i = 1\}$ denotes the $c$-dimensional probability simplex.
Let $\be_k$ be a length-$c$ indicator vector with one in the ${k}^{th}$ position and zero elsewhere, i.e., $[\be_k]_k = 1$, and $[\be_k]_j = 0 \; \forall j \neq k$, where $[\cdot]_j$ denotes the ${j}^{th}$ entry of a vector.
Each $\by_i$ is one-hot encoded, i.e., $\by_i \in \{\be_k\}_{k=1}^c$.
$\mathbbm{1}(\cdot)$ denotes the indicator function.

We denote by $\calH$ a hypothesis space, i.e., a set of candidate probabilistic classifier is parameterized by $\theta \in \Theta \subseteq \R^d$ that approximate $P_{Y|X=\bx_i}$, i.e., $\calH \triangleq \{h_\theta: \calX \to \Delta_c: \theta \in \Theta\}$.
The loss function used to evaluate model performance is denoted by $\ell: \Delta_c\times\Delta_c\to\R^+$ (e.g., cross-entropy) and $L(h_\theta)\triangleq \E_{\Pxy}[\ell(h_\theta(X), Y)]$ the population risk.
As usual, the population risk is approximated by the empirical risk  $\hat{L}(h_\theta) \triangleq \frac{1}{n}\sum_{i=1}^n \ell(h_\theta(\bx_i), \by_i)$.

\paragraph{Rashomon set, Rashomon ratio, and pattern Rashomon ratio.} 
We define the Rashomon set as the set of all  models in the hypothesis space that yield similar average loss. 
Formally, given a Rashomon parameter $\epsilon \geq 0$, the Rashomon set is defined as an $\epsilon$-level set \citep{semenova2019study}:
\begin{equation}\label{eq:rashomon-set}
    \calR(\calH, \epsilon) \triangleq \{ h_\theta \in \calH; L(h_\theta) \leq \epsilon \}.
\end{equation}
Note that the Rashomon set is determined by the hypothesis space $\calH$, the Rashomon parameter $\epsilon$, and also implicitly by the data distribution due to the evaluation of $L(h_\theta)$.
The cardinality $|\calR(\calH, \epsilon)|$ or the volume\footnote{Since $\calH$ is parameterized, the volume of $\calR(\calH, \epsilon) = \{ \theta\in\Theta; L(h_\theta) \leq \epsilon \}$ can be directly computed in $\R^d$.} $\vol(\calR(\calH, \epsilon))$ of the Rashomon set (depending on whether the Rashomon set has finite elements) can be used to quantify the size of the Rashomon set.
Given $\calR(\calH, \epsilon)$, the \emph{Rashomon ratio} \citep[Defn.~2]{semenova2019study} is defined as $\hat{\calR}(\calH, \epsilon) \triangleq \frac{\vol(\calR(\calH, \epsilon))}{\vol(\calH)}$.
$\hat{\calR}(\calH, \epsilon)$ represents the fraction of models in the hypothesis space that fit the data about equally well. 
A large Rashomon ratio indicates high multiplicity. 
Moreover, models with various desirable properties, such as better generalizability, can often exist inside a large Rashomon set.
Similar to the Rashomon ratio, \emph{pattern Rashomon ratio} \citep[Defn.~12]{semenova2019study} is defined as the ratio of the count of all possible binary predicted classes given by the functions in the Rashomon set to that given by the functions in the hypothesis space.
The complexity of computing pattern Rashomon ratio grows exponentially with the number of samples, yielding an ``expensive'' metric for predictive multiplicity when applied to large datasets.

\paragraph{Ambiguity and discrepancy.}
Instead of characterizing multiplicity in the hypothesis/ parameter space, \citet{marx2020predictive} measure multiplicity in terms of the thresholded outputs (i.e., predicted classes) of a classifier, and propose two metrics: \emph{ambiguity} and \emph{discrepancy}.
Ambiguity is the proportion of samples in a dataset that can be assigned  conflicting predictions by  competing classifiers in the Rashomon set. Discrepancy is the maximum number of predictions  that could change in a dataset if we were to switch between models within the Rashomon set.
More precisely, given a base model $\hat{h}$, the ambiguity $\alpha_\epsilon(\hat{h})$ and the discrepancy $\delta_\epsilon(\hat{h})$ are respectively defined  as 
\begin{equation}\label{eq:ambiguity-discrepancy}
\begin{aligned}
\alpha_\epsilon(\hat{h}) &\triangleq \frac{1}{n} \sum_{i=1}^n \max\limits_{h \in \calR(\calH, \epsilon)} \mathbbm{1} \left[ \argmax h(\bx_i) \neq \argmax \hat{h}(\bx_i) \right], \\
\delta_\epsilon(\hat{h}) &\triangleq \max\limits_{h \in \calR(\calH, \epsilon)} \frac{1}{n} \sum_{i=1}^n  \mathbbm{1} \left[ \argmax h(\bx_i) \neq \argmax\hat{h}(\bx_i) \right].
\end{aligned}
\end{equation}
For linear classifiers, both quantities in \eqref{eq:ambiguity-discrepancy} can be estimated by mixed integer programming \citep[Section~3]{marx2020predictive}.
A small $\epsilon$ could still lead to a large ambiguity, see Appendix~\ref{appendix:small-epsilon} for a discussion.

\begin{figure}[t!]
    \centering
     \begin{subfigure}[b]{0.25\textwidth}
         \centering
         \includegraphics[width=\textwidth]{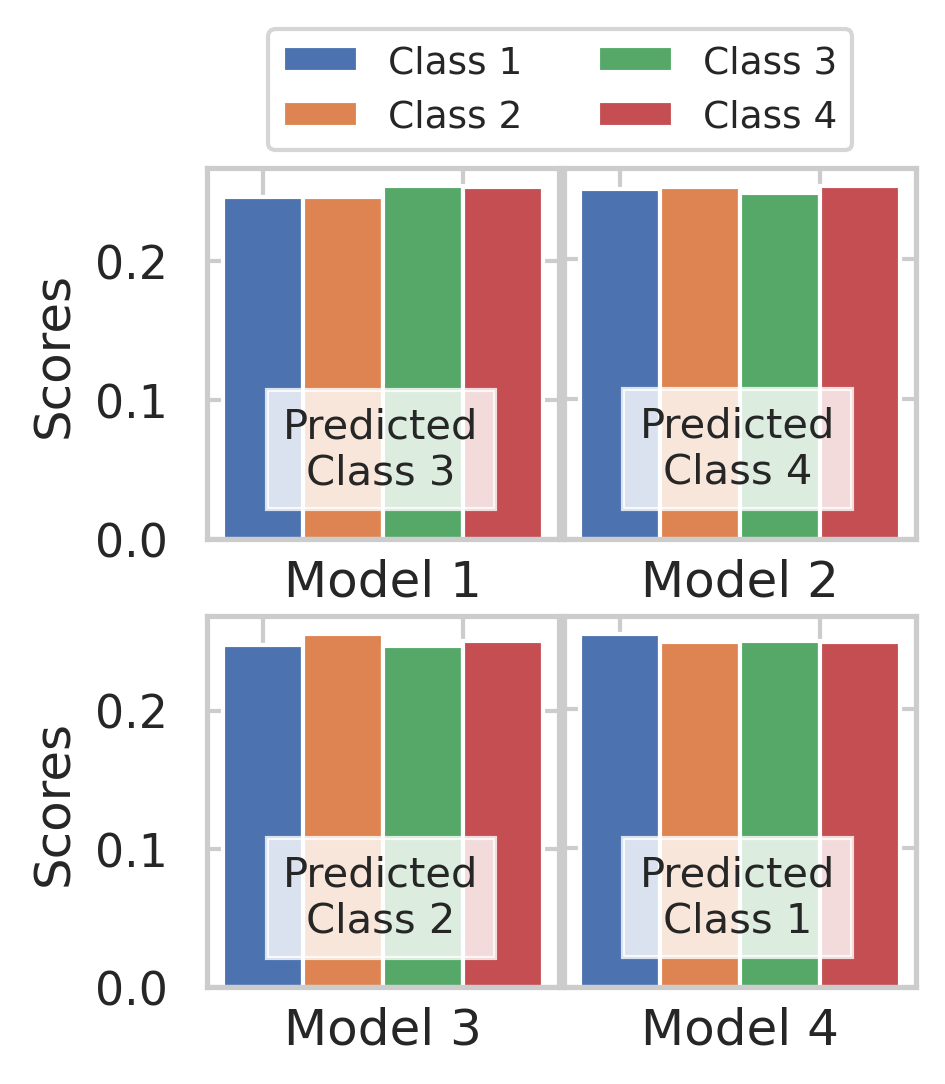}
     \end{subfigure}
     \hspace{2em}
     \begin{subfigure}[b]{0.65\textwidth}
         \centering
         \includegraphics[width=\textwidth]{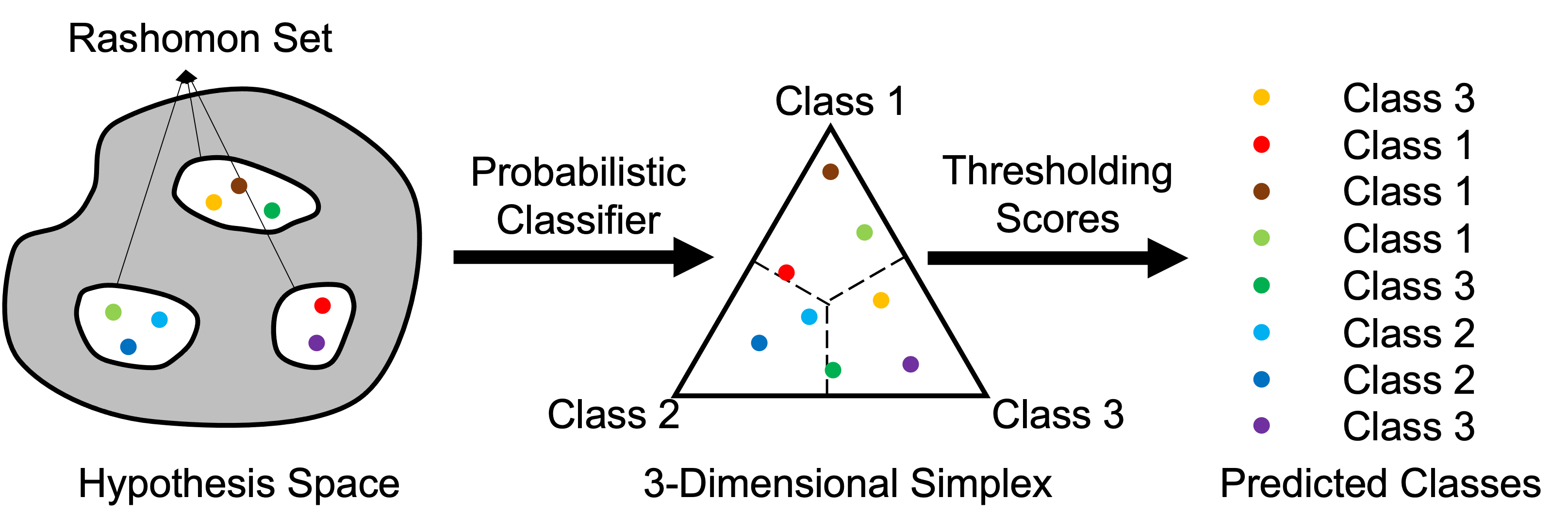}
     \end{subfigure}
     \caption{\textbf{Left}: 4 models output similar scores for  4 classes, but the predicted classes produced via thresholding are  very different. This may lead  to high ambiguity and/or discrepancy (cf.~\eqref{eq:ambiguity-discrepancy}). \textbf{Right}: Given a sample, the scores obtained from different models (colored dots) in the Rashomon set lead to different predicted classes by thresholding the scores (based on decision boundaries, i.e., dashed lines in the simplex). Prior work either measure multiplicity on the hypothesis set, e.g., the Rashomon ratio, or in terms of predicted classes, e.g., pattern Rashomon ratio and ambiguity/discrepancy. In this work, we measure predictive multiplicity directly in terms of scores on the probability simplex, i.e., either thresholded or raw scores.}
     \label{fig:score-based-metric}
\end{figure}

\section{Measuring predictive multiplicity of probabilistic classifiers}\label{sec:rashomon-capacity}
The metrics in \eqref{eq:ambiguity-discrepancy} for predictive multiplicity are based on predicted classes. 
Thus, they require finding the $\argmax$ or thresholding the scores at the output of a classifier. In probabilistic classification, thresholding may mask similar predictions produced by competing models and artificially increase multiplicity:  output scores can be almost equal across different classes, yet the (thresholded) predicted classes can be very different. 
For example, two scores $[0.49, 0.51]$ and $[0.51, 0.49]$ for a binary classification problem can lead to entirely different predicted classes after thresholding---1 and 0, respectively---and ultimately overestimate predictive multiplicity (see Fig.~\ref{fig:score-based-metric} (Left) for another multi-class example). In fact, predictive multiplicity metrics based on predicted classes may yield multiplicity \emph{even for a single fixed model} when, for example, the threshold criteria for output scores is changed. 
This subtle, yet important difference motivates us to reconsider existing metrics and introduce a new predictive multiplicity metric that is applicable to both output scores and decisions (cf.~Fig.~\ref{fig:score-based-metric} (Right) for an overview).

We begin this section by first outlining desirable properties of predictive multiplicity metrics for probabilistic classifiers.
Motivated by the potential individual-level harm incurred by an arbitrary choice of model in the Rashomon set, we focus on per-sample multiplicity metrics. 
We formally define Rashomon Capacity in terms of the KL-divergence between the output scores of classifiers in the Rashomon set. We then use Rashomon Capacity to define a predictive multiplicity metric for individual samples in a dataset. 
We present these definitions in the ideal case where the population loss can be computed exactly, and discuss empirical approximation in Section~\ref{sec:challenges}.

\subsection{Properties of multiplicity metrics for probabilistic classifiers}
Consider a fixed data distribution $P_{X, Y}$ and a corresponding Rashomon set $\calR(\calH, \epsilon)$ for a classification problem with $c$ classes. For a given sample $\bx_i\in \calD$, we collect all possible output scores produced by models in $\calR(\calH, \epsilon)$, and define the \emph{$\epsilon$-multiplicity set}  as
\begin{equation}\label{eq:mult-set}
    \calM_\epsilon(\bx_i) \triangleq \left\{h(\bx_i); h\in \calR(\calH,\epsilon) \right\} \subseteq \Delta_c.
\end{equation}

Let $m(\cdot)$ be a measure of predictive multiplicity, and $m(\calM_\epsilon(\bx_i))$ be the predictive multiplicity of sample $\bx_i$. Which properties should $m(\cdot)$ have? 
Ideally, we expect $m(\calM_\epsilon(\bx_i))$ to be a bounded value in $[1, c]$, since at least one class is assigned to sample $\bx_i$, and at most $c$ different classes could be assigned to $\bx_i$. 
Moreover, if $m(\calM_\epsilon(\bx_i)) = 1$ (i.e., a predictive multiplicity of 1), then one would expect that only one score is produced for $\bx_i$ and, thus, all predictions in  $\calM_\epsilon(\bx_i)$ are  exactly the same. Similarly, if $m(\calM_\epsilon(\bx_i)) = c$ (i.e., predictive multiplicity equal to the number of classes),  then there must exist $c$ models $\{h_1, \cdots, h_c\}\subseteq \calR(\calH,\epsilon)$ such that $h_j(\bx_i)=\mathbf{e}_j$. In other words, each of the $c$ classes  can be assigned to the sample, yielding a predictive multiplicity of $c$.
Finally, $m(\calM_\epsilon(\bx_i))$ should be monotonic in  $\calM_\epsilon(\bx_i)$, i.e., if $\calM_\epsilon(\bx_i) \subseteq \calM'_\epsilon(\bx_i)$, then $m(\calM_\epsilon(\bx_i)) \leq m(\calM'_\epsilon(\bx_i))$. We summarize these desirable properties of predictive multiplicity metrics in the following definition.

\begin{defn}\label{def:metric}
Let $\calP_c$ be the power set\footnote{We exclude the empty set $\emptyset$ from $\calP_c$, since the multiplicity of an empty scores set is not well-defined.} of the probability simplex $\Delta_c$. The function $m:\calP_c\setminus\emptyset\to \mathbb{R}$ is a \emph{predictive multiplicity metric}\footnote{We use the term ``metric'' loosely, not in the sense of defining a metric space over a set.} if for any $\calA, \calB \in \calP_c$
\begin{enumerate}
    \item $1\leq m(\calA) \leq c$;
    \item $m(\calA) = 1$ if and only if $|\calA|\leq 1$;
    \item $m(\calA) = c$ if and only if $\mathbf{e}_k \in \calA$ for $k \in [c]$, i.e., $ \calA$ contains the corner points of $\Delta_c$;
    \item $m(\calA) \leq m(\calB)$ if  $\calA \subseteq \calB$.
\end{enumerate}
\end{defn}
We introduce next a predictive multiplicity metric called Rashomon Capacity that satisfies all properties above. In the rest of the paper, when the $\epsilon$-multiplicity set $\calM_\epsilon(\cdot)$ is clear from context, we use $m(\bx_i)$ as shorthand for $m(\calM_\epsilon(\bx_i))$.

\subsection{Rashomon Capacity}
Our goal is to quantify predictive multiplicity in terms of the score variations assigned to each sample $\bx_i$ in a dataset $\calD$, given a Rashomon set $\calR(\calH, \epsilon)$ and the corresponding $\epsilon$-multiplicity set $\calM_\epsilon(\bx_i)$. 
Note that an element in $\calM_\epsilon(\bx_i)$ is a probability distribution over $c$ classes. Thus, it is natural to adopt  divergence  measures for distributions to capture the ``variation'' of  scores in $\calM_\epsilon(\bx_i)$. 
From a geometric viewpoint, a larger spread in scores indicates a greater amount of predictive multiplicity for a given sample $\bx_i$. 

Assume a probability measure (or ``weight'') $P_M$ across models in $\calR(\calH, \epsilon)$ (and therefore each score in $\calM_\epsilon(\bx_i)$), where $M$ denotes the random variable of selecting/sampling the models in the Rashomon set. 
Intuitively, if $P_M$ assigns mass 1 to a single model and 0 to all other models in the Rashomon set, then the output of only one model is considered. 
Alternatively, if $P_M$ is the uniform distribution, then  the outputs of every model in the set are equally weighed. Given a divergence measure between distributions $d(\cdot \| \cdot)$, we quantify  the spread of the scores in $\calM_\epsilon(\bx_i)$ by
\begin{equation}\label{eq:score-spread}
    \rho(\calM_\epsilon(\bx_i), P_M) \triangleq \inf\limits_{\bq\in\Delta_c} \E_{h  \sim P_M} d(h(\bx_i)\|\bq).
\end{equation}
Here, the minimizing $\bq$ acts as a ``center of gravity'' or ``centroid'' for the outputs of the classifiers in the Rashomon set for a chosen distribution $P_M$ across models. Analogously, the quantity $ \rho(\calM_\epsilon(\bx_i), P_M)$ can be understood as a measure of ``spread'' or ``inertia'' across model outputs. We select the distribution $P_M$ that results in the largest spread in scores:
\begin{equation}\label{eq:rashomon-capacity}
    C_d(\calM_\epsilon(\bx_i)) \triangleq \sup\limits_{P_M}\rho(\calM_\epsilon(\bx_i), P_M) = \sup\limits_{P_M}\inf\limits_{\bq\in\Delta_c} \E_{h  \sim P_M} d(h(\bx_i)\|\bq).
\end{equation}

The missing element is the choice of divergence measure $d(\cdot\|\cdot)$. A natural candidate is cross-entropy (or log-loss)  $d(h(\bx_i)\|\bq)=-h(\bx_i)^\top \log \bq,$ since this is the standard loss used for training and evaluating probabilistic classifiers. Alas, the minimal
cross-entropy $\min_{\bq} -h(\bx_i)^\top\log \bq = h(\bx_i)^\top \log h(\bx_i)$ is not 0 and depends on $h(\bx_i)$. Consequently, if one were to choose $d(\cdot\|\cdot)$ to be cross-entropy, the minimum value of  \eqref{eq:rashomon-capacity} would not be consistent and would depend on $\bx_i$ --- even when the  outputs across all models in the Rashomon set match! Thus, we shift  cross-entropy by its minimum $-h(\bx_i)^\top\log h(\bx_i)$. This results in KL-divergence as our divergence measure of choice: $D_{KL}(h(\bx_i)\|\bq)=-h(\bx_i)^\top\log \bq + h(\bx_i)^\top\log h(\bx_i)$. 
Putting it all together, we next formally define the spread in scores measured using KL-divergence as \emph{Rashomon Capacity}.

\begin{defn}\label{def:rashomon-capacity}
Given a sample $\bx_i$, a Rashomon set $\calR(\calH, \epsilon)$, and the corresponding $\epsilon$-multiplicity set $\calM_\epsilon(\bx_i)$, the Rashomon Capacity\footnote{We consider logarithms in base 2, and the unit of $C(\calM_\epsilon(\bx_i))$ is bit. We include further discussions and a geometric interpretation of Rashomon Capacity in Appendices~\ref{appendix:other-metrics} and~\ref{appendix:geometric-interpretation} respectively. } is defined as 
\begin{equation}
\label{eq::RC}
    m_C(\bx_i)\triangleq 2^{C(\calM_\epsilon(\bx_i))},\;\text{where}\; C(\calM_\epsilon(\bx_i)) = \sup\limits_{P_M}\inf\limits_{\bq\in\Delta_c} \E_{h  \sim P_M} D_{KL}(h(\bx_i)\|\bq),
\end{equation}
where the supremum in the right-hand side  is taken over all probability measures $P_M$ over $\calR(\calH, \epsilon)$.
\end{defn}

The exponent $C(\calM_\epsilon(\bx_i))$ is ubiquitous in information theory; in fact, $C(\calM_\epsilon(\bx_i))$ is the \emph{channel capacity} \citep{cover1999elements} of a channel $P_{Y|M}$ whose rows are the entries of $\calM_\epsilon(\bx_i)$. 
This connection motivates the name ``Rashomon Capacity'' and is useful for proving that $m_C(\bx_i)$ is indeed a predictive multiplicity metric, as stated in the next proposition.
\begin{prop}\label{prop:rashomon-capacity}
The function $m_C(\cdot)=2^{C(\calM_\epsilon(\cdot))}: \calX \to [1, c]$ satisfies all properties of a predictive multiplicity metric in  Definition~\ref{def:metric}.
\end{prop}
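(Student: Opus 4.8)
The plan is to exploit the channel-capacity interpretation highlighted immediately before the proposition. The linchpin is a classical identity: for any fixed weighting $P_M$ over the scores in $\calM_\epsilon(\bx_i)$, the inner minimization over the centroid $\bq$ is solved exactly by the induced output marginal $P_Y \triangleq \E_{h\sim P_M}\, h(\bx_i)$, and
\[
\inf_{\bq\in\Delta_c}\ \E_{h\sim P_M}\, D_{KL}\!\left(h(\bx_i)\,\|\,\bq\right) \;=\; I(M;Y),
\]
where $I(M;Y)$ is the mutual information of the channel $P_{Y\mid M}$ whose row indexed by a model $h$ is the score $h(\bx_i)$. I would prove this via the one-line decomposition $\E_{h\sim P_M} D_{KL}(h(\bx_i)\|\bq) = I(M;Y) + D_{KL}(P_Y\|\bq)$, whose second term is nonnegative and vanishes iff $\bq=P_Y$. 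Consequently $C(\calM_\epsilon(\bx_i)) = \sup_{P_M} I(M;Y)$ is precisely the Shannon capacity of this channel, and each of the four properties reduces to a standard fact about capacity.

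Properties~1 and~4 then come essentially for free. For monotonicity, if $\calA\subseteq\calB$ then every $P_M$ supported on $\calA$ is feasible for $\calB$ (extend it by zero mass), so the supremum over $P_M$ can only grow; hence $m_C(\calA)\le m_C(\calB)$. For the range, nonnegativity of mutual information gives $C\ge 0$, i.e. $m_C\ge 1$, while $I(M;Y)=H(Y)-H(Y\mid M)\le H(Y)\le \log c$ (the output takes at most $c$ values) gives $C\le\log c$, i.e. $m_C\le c$.

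The two biconditionals carry the real content. For Property~2, $C=0$ iff $I(M;Y)=0$ for every $P_M$, which forces all rows of the channel to coincide; since $\calA$ is a set of scores this is equivalent to $|\calA|\le 1$, and conversely a single (or empty) score gives a zero-capacity channel. The forward direction of Property~3 is immediate: placing uniform mass on the corners $\be_1,\dots,\be_c$ makes $Y$ uniform ($H(Y)=\log c$) and each row deterministic ($H(Y\mid M)=0$), so $C=\log c$ and $m_C=c$. For the converse, $C=\log c$ forces, at a capacity-achieving $P_M$, simultaneous equality in both bounds above: $H(Y\mid M)=0$ means every score in the support of $P_M$ is deterministic, i.e. some corner $\be_k$, and $H(Y)=\log c$ means $Y$ is uniform over all $c$ classes---achievable only if all $c$ corners appear in the support. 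Hence $\{\be_1,\dots,\be_c\}\subseteq\calA$.

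I expect the main obstacle to be making this converse rigorous when $\calM_\epsilon(\bx_i)$ is infinite, since the supremum defining $C$ need not be attained. The clean argument above relies on a capacity-achieving $P_M$ whose support lands exactly on corners; for a finite (or, more generally, closed) multiplicity set such a maximizer exists by compactness, so the equality analysis applies verbatim. For a general countable $\calA$ one must rule out the degenerate scenario in which scores approach the corners without reaching them while $I(M;Y)\to\log c$. I would address this by restricting to the finite or closed multiplicity sets relevant in practice, or by combining continuity of $I(M;Y)$ in the channel rows with the observation that $H(Y\mid M)=0$ is an exact rather than a limiting condition.
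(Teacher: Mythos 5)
Your proof is correct and follows essentially the same route as the paper's: identify $C(\calM_\epsilon(\bx_i))$ with the Shannon capacity of the channel whose rows are the scores, bound it via $I(M;Y)=H(Y)-H(Y\mid M)\le \log c$, and analyze the equality cases to obtain the two biconditionals. You are in fact slightly more careful than the paper in two places---explicitly justifying that the inner infimum equals $I(M;Y)$ through the decomposition $\E{\,h\sim P_M}\!\left[D_{KL}(h(\bx_i)\|\bq)\right]=I(M;Y)+D_{KL}(P_Y\|\bq)$, and flagging the attainment issue for infinite multiplicity sets, which the paper sidesteps by assuming $|\calM_\epsilon(\bx_i)|<\infty$.
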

\begin{proof}
See Appendix~\ref{appendix-proof:prop1}.
\end{proof}
In contrast, ambiguity and discrepancy in \eqref{eq:ambiguity-discrepancy} do not satisfy the properties of a predictive multiplicity metric outlined in Definition~\ref{def:metric}.
An interesting connection between ambiguity and Rashomon Capacity is that ambiguity measures the fraction of samples in a dataset with non-zero Rashomon Capacity. 
In addition, Rashomon Capacity is fundamentally different from the size of a Rashomon set, in the sense that a larger Rashomon set does not necessarily lead to a larger Rashomon Capacity.
Using a binary classification problem as an example, consider two Rashomon sets with scores
\begin{equation}
\begin{aligned}
    \calR_1(\calH, \epsilon) &= \{h_1, h_2, h_3\},\; h_1(\bx_i) = [0.45, 0.55], h_2(\bx_i) = [0.50, 0.50], h_3(\bx_i) = [0.60, 0.40],\\
    \calR_2(\calH, \epsilon) &= \{h_1, h_2\},\; h_1(\bx_i) = [0.85, 0.15], h_2(\bx_i) = [0.10, 0.90].
\end{aligned}
\end{equation}
$\calR_2(\calH, \epsilon)$ has a larger Rashomon Capacity than $\calR_1(\calH, \epsilon)$, 
 albeit $|\calR_2(\calH, \epsilon)| = 2 < |\calR_1(\calH, \epsilon)| = 3$.

\subsection{Rashomon Capacity in score and decision domains}
Rashomon Capacity is defined in terms of the raw outputs in $\Delta_c$ of a probabilistic classifier; therefore it can also be evaluated with decisions, since a decision, after one-hot encoding, still lies in the probability simplex (at a vertex). 
This allows Rashomon Capacity to provide a more nuanced view of score variation, and can be used to identify the number of ``conflicting'' classes in the predictions produced by models in the Rashomon set (i.e., Rashomon Capacity is between $1$ and $c$).

Taking a ternary classification as an example, for three score vectors $[0.49, 0.51,0]$ and $[0.51, 0.49,0]$, the Rashomon Capacity of the raw scores for these samples is close to 1. For the thresholded scores ($[0,1,0]$ and $[1,0,0]$) the Rashomon Capacity is now 2.
Since Rashomon Capacity is between 1 and 3, this indicates that the confusion is between 2 classes instead of 3. 
In contrast, prior metrics that also operate on thresholded scores, such as ambiguity and discrepancy \eqref{eq:ambiguity-discrepancy}, only capture agreement among predictions.
In this sense, score-level metrics could potentially provide a finer characterization of predictive multiplicity.
This \emph{does not} mean that multiplicity should only be reported in the score domain.
On the contrary, our suggestion is that multiplicity should be measured at both the score and threshold levels and reported to stakeholders, as scores and decisions paint different pictures on how models conflict.

\section{Computational challenges of multiplicity metrics}\label{sec:challenges}
The computation of any multiplicity metric requires an approximate characterization of the Rashomon set---even for simple hypothesis spaces such as logistic regression.
For instance, the computation of the Rashomon ratio involves estimating $\vol(\calR(\calH, \epsilon))$, which is a level set estimation problem \citep{mason2021nearly}, and is computationally infeasible\footnote{An exact computation of $\vol(\calR(\calH, \epsilon))$ in a special case, e.g., ridge regression where the Rashomon set forms an ellipsoid, is provided in \citet[Section~5.1]{semenova2019study}.} when the hypothesis space $\calH$ is large \citep{bachoc2021sample}.
For a logistic regression, the exact form of the pattern Rashomon ratio is not tractable due to the non-linearity of the maximum likelihood ratio \citep{hastie2009elements}.
Similarly, the computation of ambiguity/discrepancy requires solving an optimization over the Rashomon set and can be computationally burdensome since 0-1 loss is not differentiable.

Given a dataset and a hypothesis space, there are two core challenges in computing any predictive multiplicity metric (see Definition \ref{def:metric}).
The first challenge is selecting an appropriate Rashomon parameter $\epsilon$, since the smallest achievable test loss $L(h_\theta)$ is unknown and only empirically approximated using a dataset with finite samples. 
The second challenge is approximating the Rashomon set without exhaustively searching the hypothesis space\footnote{In neural networks, for example, the size of the Rashomon set is determined by the number of local minima, which grows exponentially many with the number of parameters \citep{auer1995exponentially}.}. Next, we discuss strategies for  addressing these two challenges. 

\paragraph{Selection of the Rashomon parameter $\epsilon$.} 
The value of $\epsilon$ can be set relative to the performance of a reference model. 
A natural choice of the reference model is the empirical risk minimizer, i.e., $h_{\theta^*}$, where $\theta^* \in \argmin_{\theta\in\Theta} \hat{L}(h_\theta)$ (see Section \ref{sec:background} for notation).
Here, we can set  $\epsilon = \hat{L}(h_{\theta^*}) + \epsilon'$ with $\epsilon' \geq 0$, i.e., the Rashomon parameter depends on the minimum empirical loss. 
The parameter $\epsilon'$, in turn, can be selected in terms of the upper boundary of a confidence interval around the empirical minimum. For example, the confidence interval can be estimated via  \emph{bootstrapping} (see Appendix~\ref{appendix:wo-neural-networks} for bootstrapped loss intervals). 
Naturally, $\epsilon'$ depends on the size of the test set used for evaluating $h_{\hat{\theta}}$---when the dataset has $n$ samples, usually $\epsilon'$ will be of the order $O(1/\sqrt{n})$ \citep{shalev2014understanding}.
If more samples are available to evaluate model performance (rendering narrower confidence intervals), this parameter's value will decrease.

The discussion above motivates the one-sided definition for the Rashomon set in \eqref{eq:rashomon-set}. 
If one were able to find the unique classifier that outperforms all others in terms of population loss, then it is justified to use this classifier in practice. 
In fact, this point is eloquently made in \citet{creel2021algorithmic}, which argues  that there is no (moral) harm in using the most accurate classifier, i.e., selecting a classifier that has a provable smaller average loss (or higher accuracy) than another.  
Existing works on multiplicity, e.g., \citet{semenova2019study} and \citet{marx2020predictive}, also adopt a one-sided Rashomon set definition.

\paragraph{The Rashomon subset.}
With limited computational power and memory, exploring the Rashomon set, i.e., searching the hypothesis space $\calH$ to find all models with test losses smaller than $\epsilon$, is challenging.
For several hypothesis spaces (e.g., neural networks), we are only able to acquire a small number of models in the Rashomon set. 
We denote the collection of these models as a \emph{Rashomon subset} $\widetilde{\calR}(\calH, \epsilon)$.

The Rashomon subset can be used to approximate the ``true'' Rashomon set $\calR(\calH, \epsilon)$ in the evaluation of multiplicity metrics.
We can construct a Rashomon subset with $K$ models by choosing the Rashomon parameter $\epsilon$ in terms of a reference model $h_{\theta^*}$, i.e.,
\begin{equation}\label{eq:rashomon-subset}
    \widetilde{\calR}(\calH, \epsilon') \triangleq \{h_{\theta_i}\in\calH; L(h_{\theta_i}) \leq \hat{L}(h_{\theta^*}) + \epsilon'\}_{i=1}^K \subseteq \calR(\calH, \hat{L}(h_{\theta^*}) + \epsilon').
\end{equation}
For example, the Rashomon ratio can be approximated as $K/\vol(\calH)$.
Similarly, we can optimize for ambiguity/discrepancy in \eqref{eq:ambiguity-discrepancy} over $h_{\theta_i}\in\widetilde{\calR}(\calH, \epsilon')$.
Finally, we can approximate the $\epsilon'$-multiplicity subset $\widetilde{\calM}_{\epsilon'}(\bx_i) \triangleq \left\{h_{\theta_i}(\bx_i); h\in \widetilde{\calR}(\calH,\epsilon') \right\}$ in \eqref{eq:mult-set} for Rashomon Capacity.

In this sense, evaluating multiplicity metrics boils down to finding a Rashomon subset $\widetilde{\calR}(\calH, \epsilon')$ and computing the metric for that set. 
This is an approximation of the true multiplicity, yet becomes more accurate with an increasing $K$. 
Next, we provide an algorithm based on model weight perturbation to find a Rashomon subset for estimating Rashomon Capacity for any differentiable model.

Note that, unless the Rashomon set can be fully characterized, all estimates of multiplicity metrics are underestimates based on approximating the true Rashomon set by a Rashomon subset.
We emphasize that identifying and disclosing predictive multiplicity --- even if an underestimate --- is still critical for models deployed in applications of individual-level consequence (e.g., healthcare, education, lending), and is better than the current practice reporting no multiplicity at all. 

\paragraph{Computing Rashomon Capacity.}
The definition of Rashomon Capacity in \eqref{eq::RC} does not assume a finite cardinality of the Rashomon set. 
Remarkably, even when the Rashomon set has infinite cardinality, the value of Rashomon Capacity of a sample can be recovered by considering only a small number of models in the Rashomon set.
In fact, for each sample $\bx_i$, there exists a $\epsilon$-multiplicity subset of at most $c$ models that fully captures the variation in scores. 
This statement is formalized by the next proposition, which can be proven by applying Carath\'eodory's theorem \citep{caratheodory1911variabilitatsbereich}.
\begin{prop}\label{prop:sampling-rashomon-set}
For each sample $\bx_i\in \calD$, there exists a $\epsilon$-multiplicity subset  $\widetilde{\calM}_\epsilon(\bx_i)\subseteq\calM_\epsilon(\bx_i)$ with $|\widetilde{\calM}_\epsilon(\bx_i)|\leq c$  that fully captures the spread in scores for $\bx_i$ across the Rashomon set, i.e.,  $C(\widetilde{\calM}_\epsilon(\bx_i)) = C(\calM_\epsilon(\bx_i))$. 
In particular, there are at most $c$ models in a Rashomon subset $\widetilde{\calR}(\calH,\epsilon)$ whose output scores yield the same Rashomon Capacity for $\bx_i$ as the entire Rashomon set.
\end{prop}
\begin{proof}
See Appendix~\ref{appendix-proof:prop2}.
\end{proof}
Proposition~\ref{prop:sampling-rashomon-set}  implies that, for each sample, there exists a Rashomon subset with $c$ models that captures all score variation. 
In other words, the value of Rashomon Capacity remains the same regardless if we measure multiplicity on this subset or on the entire Rashomon set.
This allows us to circumvent the task of characterizing the entire Rashomon set (which has potentially infinite models), and focus on identifying $c$ models per sample that maximize score variations while still satisfying a target loss constraint.
We describe next a  method (described in detail in Algorithm~\ref{alg:awp-RS}) based on weight perturbation that obtains $c$ models in the Rashomon subset for each sample.

Given a sample $\bx_i$, we obtain models with output predictions $\bp_k$
by approximately solving the following optimization problem which maximizes the output score for class $k$: 
\begin{equation}\label{eq:search-parameter}
    \bp_k = h_{\hat{\theta}}(\bx_i),\;\text{where}\; \hat{\theta} = \argmax \limits_{h_\theta\in\calR(\calH, \epsilon)} [h_\theta(\bx_i)]_k,\;\forall k = 1, 2, \cdots, c.
\end{equation}
To solve \eqref{eq:search-parameter}, for each $k$, we set the objective to be $\min_{\theta\in\Theta}-[h_{\theta}(\bx_i)]_k$, compute the gradients, and update the parameter $\theta$ until $L(h_\theta) > \epsilon$.
Given a pre-trained model in the Rashomon set, \eqref{eq:search-parameter} can be viewed as an adversarial weight perturbations (\texttt{AWP}) technique to explore the Rashomon set \citep{wu2020adversarial, tsai2021formalizing} (see Appendix~\ref{appendix:awp-logistic} for exact weight perturbation on logistic regression). 

With the discrete $\epsilon$-multiplicity subset obtained by solving \eqref{eq:search-parameter}, Rashomon Capacity can be computed by standard procedures such as the Blahut–Arimoto (BA) algorithm \citep{blahut1972computation, arimoto1972algorithm}. 
The BA algorithm is a class of iterative algorithms for numerically computing \emph{discrete} channel capacity (or more generally, the rate-distortion function), see Appendix~\ref{appendix:ba} for more details.
Note that AWP may still underestimate the true Rashomon Capacity, yet it greatly improves the estimates compared to prior work and is less computationally intensive than sampling, as shown in the next section.

The procedure in \eqref{eq:search-parameter} reveals a desirable property of a Rashomon subset: it should include models with significant score variations. 
Similarly, a desirable Rashomon subset for accurately evaluating ambiguity/discrepancy is one with models that have most score disagreement.
This property also explains why collecting a Rashomon subset by straightforwardly sampling models \citep{semenova2019study} in the Rashomon set could be inefficient (cf. Algorithm~\ref{alg:sampling-RS}) since the randomly sampled models would not necessarily have a significant score disagreement/variation.
In fact, the sampling strategy could require a significant amount of models; see for example in \citet[Section~6]{semenova2019study}, a Rashomon subset consisting of 250k decision trees.

\section{Empirical study}\label{sec:exp}
We illustrate how to measure, report, and potentially resolve predictive multiplicity of probabilistic classifiers using Rashomon Capacity on UCI Adult \citep{Lichman:2013}, COMPAS \citep{angwin2016machine}, HSLS \citep{ingels2011high}, and CIFAR-10 datasets \citep{krizhevsky2009learning}.
UCI Adult and COMPAS are two binary classification datasets on income and recidivism prediction, respectively, and are widely used in fairness research \citep{mehrabi2021survey}.
The HSLS is an education dataset, collected from high school students in the USA, whose features include student and parent information (see Appendix~\ref{appendix:data-preprocessing} for details).
We created a binary label $Y$ from students' $9^\text{th}$-grade math test scores (i.e., top 50\% vs. bottom 50\%).
We select the first three datasets  to illustrate the effect of predictive multiplicity on individuals.
Finally, we include the CIFAR-10 dataset to demonstrate how to report Rashomon Capacity in multi-class classification.

For the classifiers, we adopt feed-forward neural networks for the first three datasets, and a convolutional neural network VGG16 \citep{simonyan2014very} for CIFAR-10.
For more information on the datasets, neural network architectures, and training details, see Appendix~\ref{appendix:exp-setup}.
All numbers reported are evaluated on the test set.
 
\begin{figure}[t!]
     \centering
     \begin{subfigure}[b]{0.24\textwidth}
         \centering
         \includegraphics[width=\textwidth]{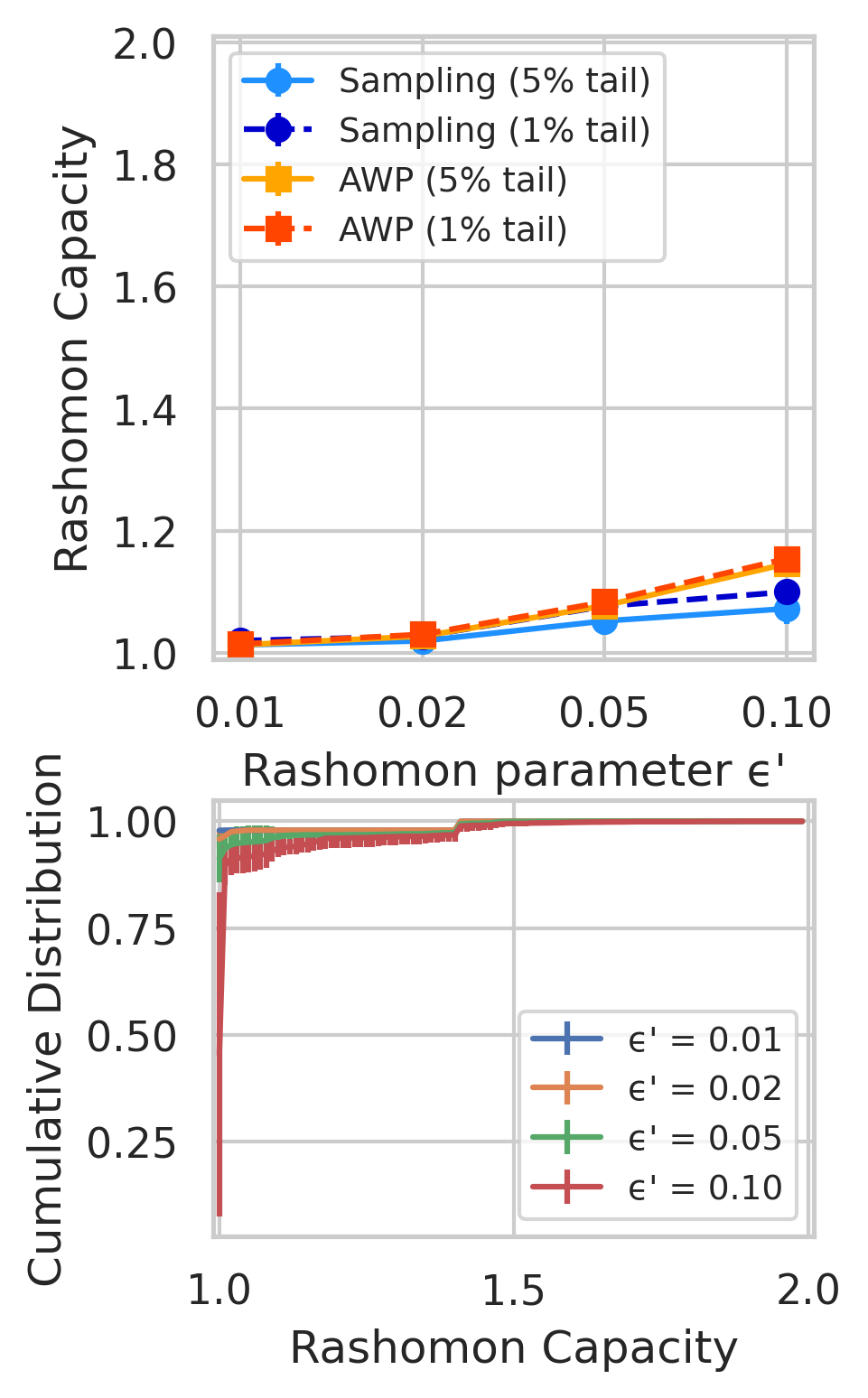}
         \caption{UCI Adult (80.28\%)}
     \end{subfigure}
     \begin{subfigure}[b]{0.24\textwidth}
         \centering
         \includegraphics[width=\textwidth]{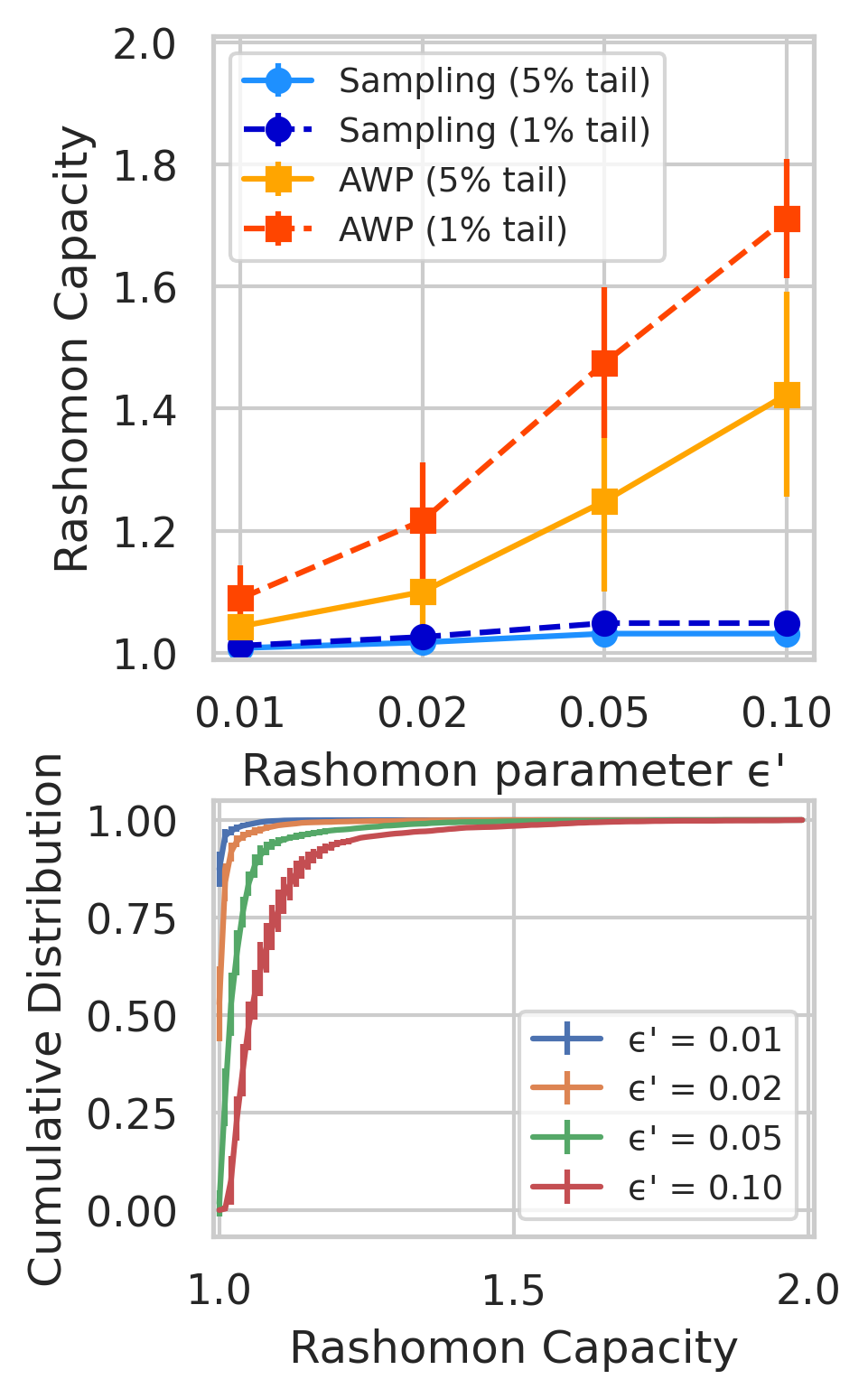}
         \caption{COMPAS (66.22\%)}
     \end{subfigure}
     \begin{subfigure}[b]{0.24\textwidth}
         \centering
         \includegraphics[width=\textwidth]{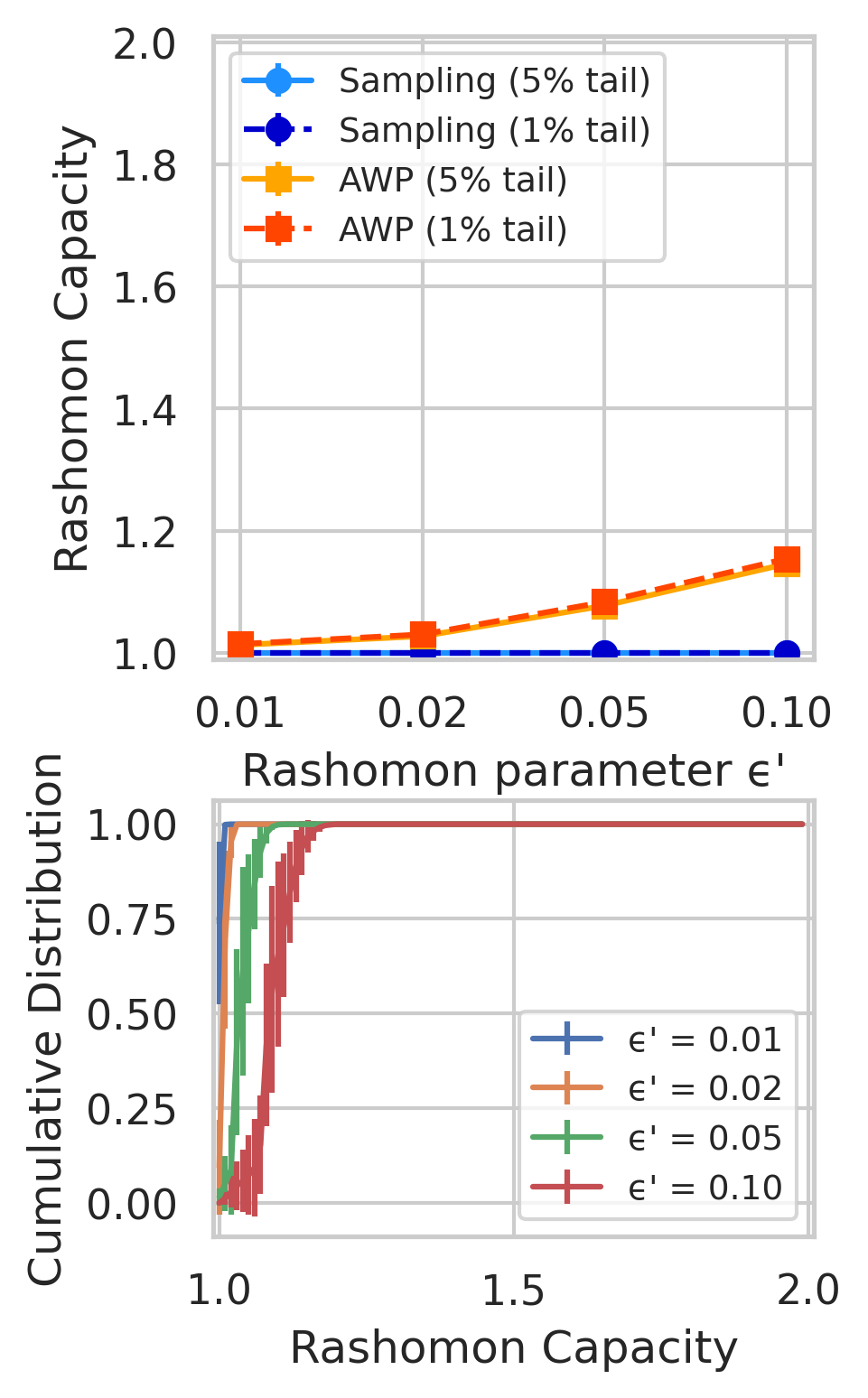}
         \caption{HSLS (70.39\%)}
     \end{subfigure}
     \begin{subfigure}[b]{0.254\textwidth}
         \centering
         \includegraphics[width=\textwidth]{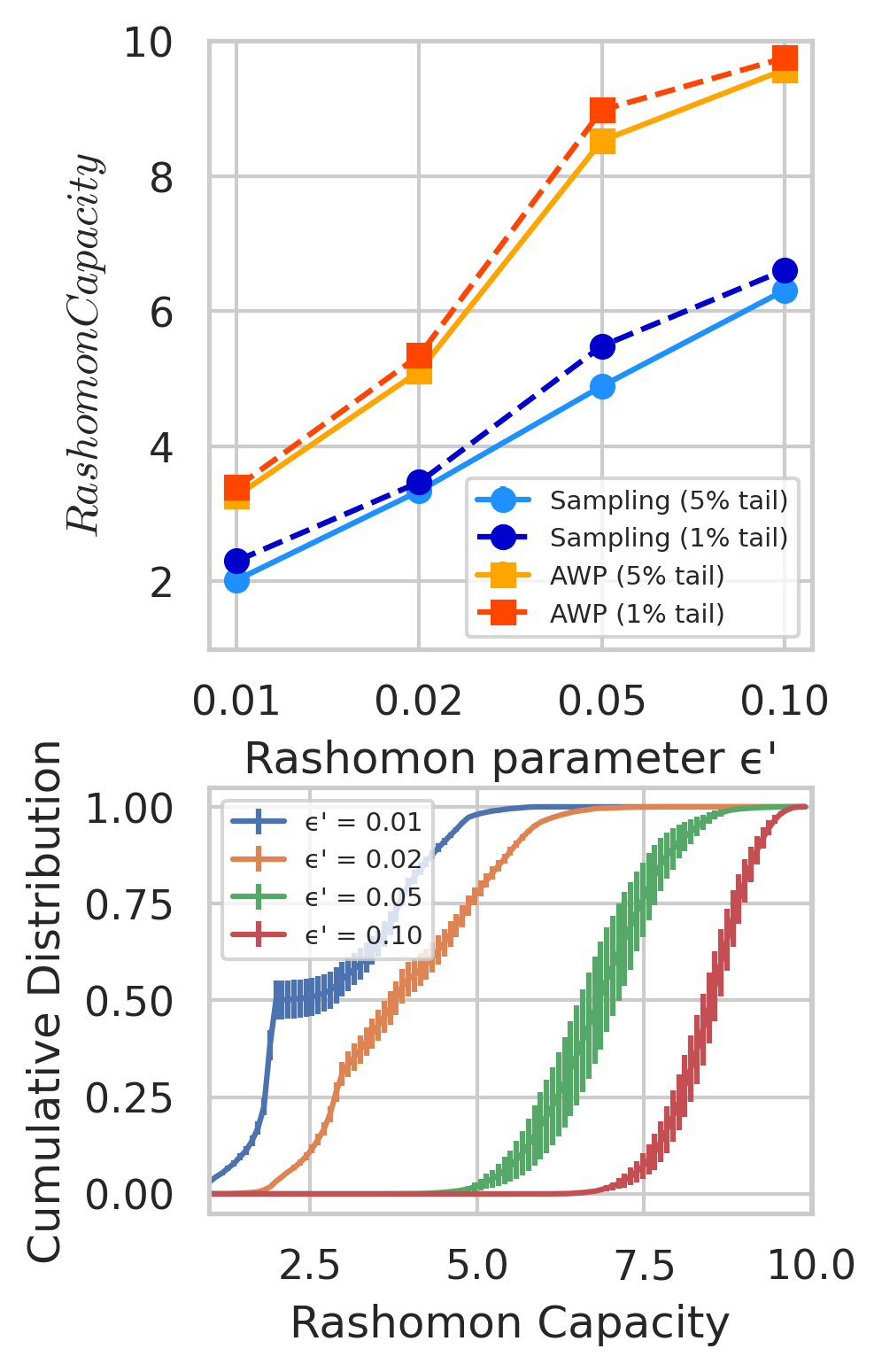}
         \caption{CIFAR-10 (81.67\%)}
     \end{subfigure}
    \caption{For each dataset (percentage is test accuracy), the top figure shows the mean and standard error of the largest 1\% and 5\% (1\% tail and 5\% tail in the legend) Rashomon Capacity among all the samples with difference Rashomon parameter $\epsilon$. Two methods are used to obtain models from the Rashomon set, \texttt{AWP} \eqref{eq:search-parameter} and random \texttt{sampling}. The bottom figure shows the cumulative distribution of the Rashomon Capacity of all the samples obtained by \texttt{AWP}. Each point is generated with 5 repeated splits of the dataset. }
    \label{fig:capacity-profiles}
\end{figure}

\paragraph{Measuring and reporting predictive multiplicity via Rashomon Capacity.}
We evaluate two methods, \texttt{sampling} with different weight initialization seeds \citep{semenova2019study} and \texttt{AWP} \eqref{eq:search-parameter}, and report the Rashomon Capacity in Fig.~\ref{fig:capacity-profiles}.
For \texttt{sampling}, we construct a Rashomon subset $\widetilde{\calR}(\calH, \epsilon')$ with 100 models by different random initialization (cf.~\eqref{eq:rashomon-subset} with $K=100$). 
For \texttt{AWP}, the $\epsilon'$-multiplicity subsets $\widetilde{\calM}_{\epsilon'}(\bx_i)$ of each sample $\bx_i$ is constructed by the $\bp_1, \cdots, \bp_c$ obtained from \eqref{eq:search-parameter}. 
For example, when $\epsilon' = 0.01$, the models in the Rashomon set $\widetilde{\calR}(\mathcal{H}, 0.01)$ achieves small and statistically indistinguishable test losses from each other; however, the Rashomon Capacity is non-zero for a significant fraction of samples.
In particular, we showcase the average top 1\% and top 5\% of the Rashomon Capacity from all samples for different $\epsilon'$, and the cumulative distribution of the Rashomon Capacity across the samples. 
As the Rashomon parameter increases, both \texttt{sampling} and \texttt{AWP} lead to higher Rashomon Capacity since the Rashomon set gets larger.
Observe that the increase of Rashomon Capacity when $\epsilon'$ varies from $0.01$ to $0.02$ is different for UCI Adult and COMPAS datasets, since the choice of $\epsilon'$ is data-dependent.
The \texttt{AWP} \eqref{eq:search-parameter} achieves higher Rashomon Capacity than random \texttt{sampling} as \texttt{AWP} intentionally explores the Rashomon set that maximizes the scores variations. It is important to keep in perspective that \emph{each sample} in the high-Rashomon Capacity tail displayed in Fig.~\ref{fig:capacity-profiles} corresponds to an \emph{individual} who receives conflicting predictions. In applications such as criminal justice and education, conflicting predictions for even one individual should be reported in, e.g., model cards \citep{mitchell2019model}. 

For the experiments in Figure~\ref{fig:capacity-profiles}, the estimated Rashomon ratio are almost all zeros since the hypothesis space, parameterized by millions of parameters, is always significantly larger.
In SM, we further report (i) other strategies to explore the Rashomon set in Appendix~\ref{appendix:flipping-fgsm}, (ii) Rashomon Capacity evaluated with decisions instead of scores in Appendix~\ref{appendix:rc-decision}, and (iii) other metrics of multiplicity such as ambiguity/discrepancy in Appendix~\ref{appendix:ambiguity-discrepancy}.

\paragraph{Resolving predictive multiplicity by greedy model selection.} 
Predictive multiplicity could be resolved by, for example, selecting a subset of models (potentially of size one in the ideal case) and releasing the scores to a stakeholder.
We propose a greedy model selection procedure to select a subset of competing classifiers for resolving predictive multiplicity. 
Given $R$ competing classifiers, the goal is to select $r$ models ($r < R$) that result in distributions of the Rashomon Capacity similar to that of the original $R$ models. 
Starting from a dataset $\calD$ and a Rashomon subset $\widetilde{\calR}(\calH, \epsilon')$, this can be implemented by (i) initializing a set $\calA$ of models by randomly selecting a model in $\widetilde{\calR}(\calH, \epsilon')$, (ii) growing  $\calA$ by adding one model from $\widetilde{\calR}(\calH, \epsilon')$  that maximizes the average Rashomon Capacity across $\calD$, and (iii) stopping until there are $r$ models in $\calA$.
This greedy model selection is inspired by Property 4 (monotonicity) in Definition~\ref{def:metric}, since including the models to the set $\calA$ does not reduce capacity.
In Fig.~\ref{fig:resolve}, we models from the Rashomon sets for UCI Adult, COMPAS, HSLS and CIFAR-10 datasets respectively. Here, the hypothesis space are feed-forward neural networks (see details in Appendix~\ref{appendix:exp-setup}). Observe that only a small subset of the sampled models, selected by the greedy model selection procedure, is required to recover the distribution of the Rashomon Capacity. 
On COMPAS dataset, the 10 models obtained by the greedy model selection procedure capture the Rashomon Capacity computed with the original 163 models, i.e., these 10 models display most of the score variations. 

In Appendix~\ref{appendix:compas-case-study}, we provide a worked-out example with the COMPAS dataset on how Rashomon Capacity can be used to identify high-multiplicity samples.
We observe patterns in sex and prior convictions leading to samples having high Rashomon Capacity.
We further discuss promising strategies, e.g., ensemble methods, model calibration and weight regularization to resolve multiplicity, in Appendices~\ref{appendix:ensemble-calibration} and~\ref{appendix:wo-neural-networks}. 
We observe that the ensemble method could lead to a smaller Rashomon Capacity, and is a viable strategy for resolving multiplicity in small models, but may be infeasible for large, computationally expensive models.
In addition, weight regularization for logistic regression (e.g., LASSO or ridge penalties) could also reduce Rashomon Capacity. 
On the other hand, a perfectly calibrated classifier does not necessarily resolve multiplicity---a classifier whose predicted classes matches the true classes ``on average'' across samples does not necessarily translate to a consistent set of predictions for a single target sample across equally calibrated classifiers.

\begin{figure}[t!]
     \centering
     \begin{subfigure}[b]{0.23\textwidth}
         \centering
         \includegraphics[width=\textwidth]{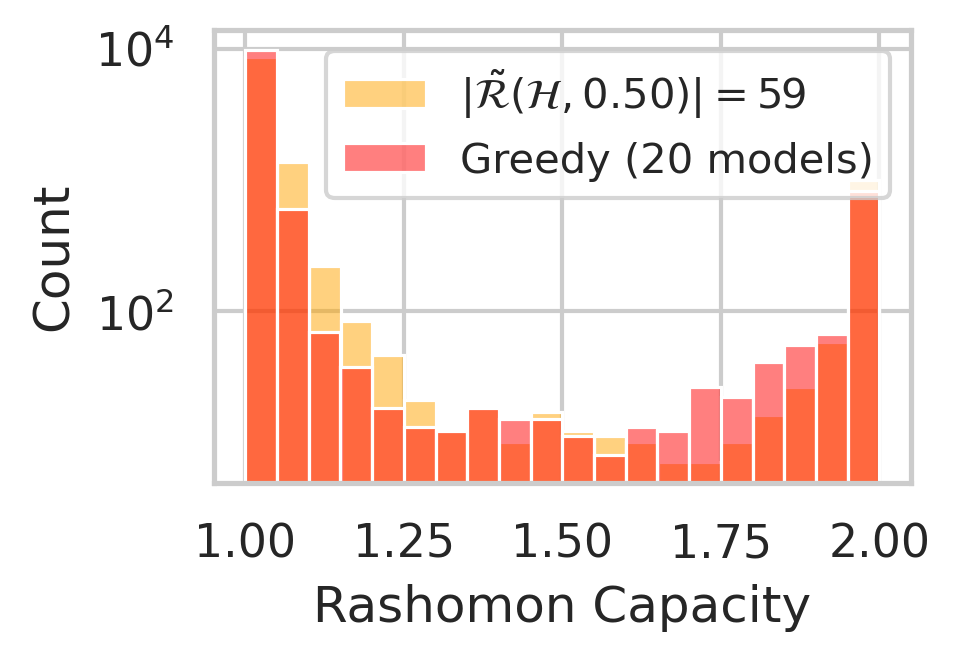}
         \caption{UCI Adult (83.47\%)}
     \end{subfigure}
     \begin{subfigure}[b]{0.23\textwidth}
         \centering
         \includegraphics[width=\textwidth]{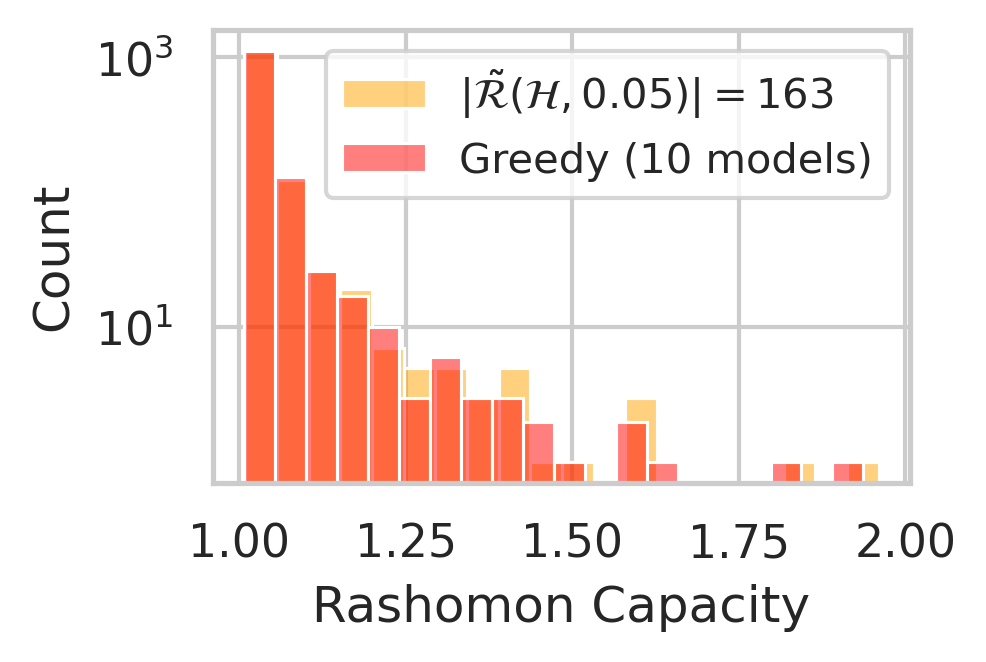}
         \caption{COMPAS (67.35\%)}
     \end{subfigure}
     \begin{subfigure}[b]{0.23\textwidth}
         \centering
         \includegraphics[width=\textwidth]{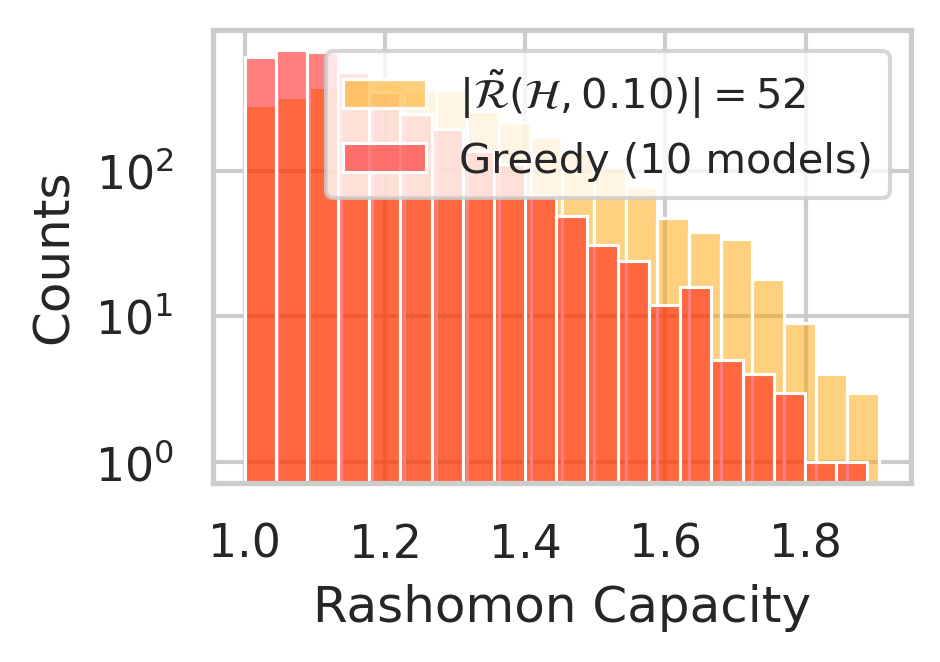}
         \caption{HSLS (69.91\%)}
     \end{subfigure}
     \begin{subfigure}[b]{0.23\textwidth}
         \centering
         \includegraphics[width=\textwidth]{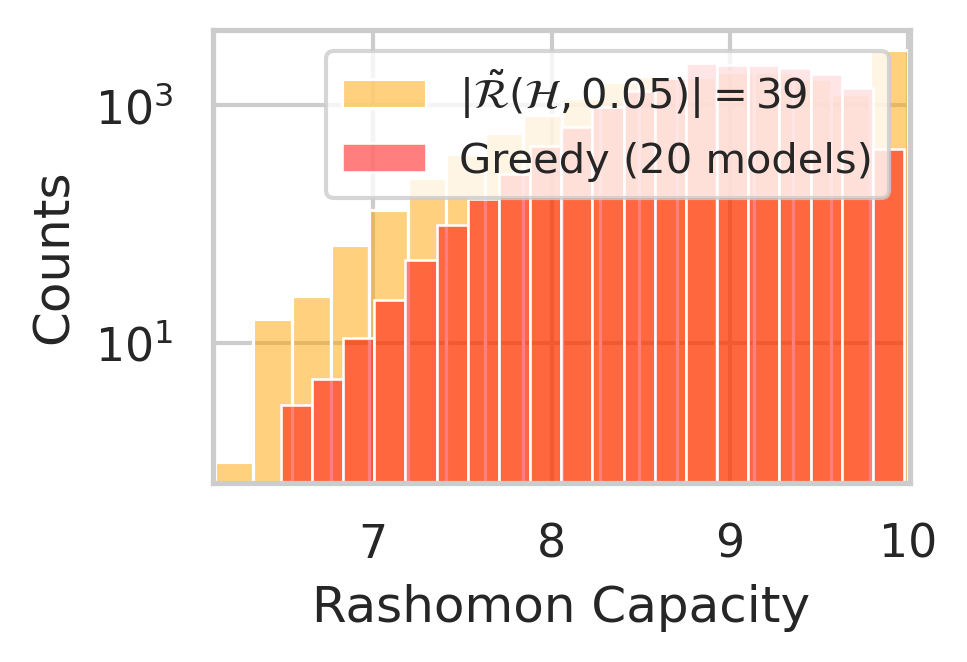}
         \caption{CIFAR-10 (80.33\%)}
     \end{subfigure}
    \caption{The distributions of the Rashomon Capacity for UCI Adult, COMPAS, HSLS and CIFAR-10 datasets ($\epsilon$ is percentage of mean test accuracy) obtained by sampling models from the Rashomon subset and applying greedy model selection procedure (Greedy in the legend) on the sampled models.}
    \label{fig:resolve}
\end{figure}

\section{Final remarks}\label{sec:final-remarks}
\paragraph{Limitations.}
The AWP \eqref{eq:search-parameter}, despite being more efficient than random sampling, still requires re-training/perturbing a significant amount of models, and is computationally burdensome when scaling up to large datasets with millions of samples. 
Rashomon Capacity in certain cases may seem small for already significant score variations across classes, due to the convexity of KL-divergence.
Indeed, this issue will occur for any strictly convex measure of divergence in \eqref{eq:score-spread}. 
We provide a further discussion of how to interpret the numerical values of Rashomon Capacity in Appendix~\ref{appendix:convexity}.

\paragraph{Future directions.}
First, overcoming the computational bottleneck to efficiently explore the Rashomon set is an impactful direction for optimization techniques.
Second, Rashomon Capacity could be generalized to other probability divergences, e.g., $f$-divergences \citep{csiszar1995generalized}, R\'enyi divergence \citep{renyi1961measures}, or Wasserstein distance \citep{vaserstein1969markov}.
This generalization could potentially provide further operational significance and tunability for measuring multiplicity, as long as the conditions in Definition~\ref{def:metric} are satisfied.
Third, ensemble methods could be a promising strategy to reduce predictive multiplicity that worth studying.

\paragraph{Broader impacts.}
The Rashomon effect impacts model selection \citep{rudin2019stop, hancox2020robustness, d2020underspecification, black2022model}, explainability \citep{pawelczyk2020counterfactual}, and fairness \citep{coston2021characterizing}.
\citet{rudin2019stop} suggested that, given the choice of competing models, machine learning practitioners should select interpretable models \emph{a priori}, rather than selecting a black-box model with conjectural explanations post-training.
\citet{hancox2020robustness} and \citet{d2020underspecification} argued that epistemic patterns, e.g., causality, should be reflected when selecting models the Rashomon set.
\citet{black2022model} further studied multiplicity in the context of the conventional bias-variance trade-off analysis.
Competing models in the Rashomon set may not only render conflicting explanations for predictions \citep{pawelczyk2020counterfactual} and measures of feature importance \citep{fisher2019all}, but also have inconsistent performance across population sub-groups. 
Consequently, the arbitrary  choice of a single model may result in unnecessary and discriminatory bias against vulnerable population groups \citep{coston2021characterizing}. 
See Appendix~\ref{appendix:explanation} for further discussion, including connections with individual fairness \citep{dwork2018individual}.

\clearpage
\bibliography{reference.bib}
\bibliographystyle{apalike}

\newpage
\appendix
\def\thesection{A.\arabic{section}}
\def\theequation{A.\arabic{equation}}
\def\thefigure{A.\arabic{figure}}
\def\thetable{A.\arabic{table}}
\def\thealgorithm{A.\arabic{algorithm}}
\setcounter{section}{0}
\setcounter{equation}{0}
\setcounter{figure}{0}  
\setcounter{table}{0}
\setcounter{algorithm}{0}

\section*{Appendix}
This appendix include omitted proofs for Proposition~\ref{prop:rashomon-capacity} and~\ref{prop:sampling-rashomon-set}, additional explanations and discussions, details on experiment setups and training, and additional experiments.
For clarity, the numbers with a prefix \emph{A.} refer to equations, figures, and tables in the appendix; numbers without the prefix refer to equations, figures, and tables in the main paper. 

\section{Omitted proofs}\label{appendix:proofs}
\subsection{Proof of Proposition~\ref{prop:rashomon-capacity}}\label{appendix-proof:prop1}
\begin{prop}
The function $m_C(\cdot)=2^{C(\calM_\epsilon(\cdot))}: \calX \to [1, c]$ satisfies all properties of a predictive multiplicity metric in  Definition~\ref{def:metric}.
\end{prop}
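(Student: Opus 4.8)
The plan is to prove the proposition entirely through the channel-capacity identification noted immediately before the statement. First I would make that identification precise: for a set $\calA = \{\bp_1, \bp_2, \dots\} \subseteq \Delta_c$, regard $\calA$ as the rows of a channel $P_{Y\mid M}$ with finite output alphabet $[c]$, where $M$ selects a row and $Y \mid \{M = j\} \sim \bp_j$. The inner minimization in the definition is classical: $\inf_{\bq \in \Delta_c} \E_{h \sim P_M} D_{KL}(h(\bx_i) \| \bq) = I(M;Y)$, with the optimal $\bq$ equal to the output marginal $P_Y$. Hence $C(\calA) = \sup_{P_M} I(M;Y)$, and each property of Definition~\ref{def:metric} becomes a statement about the capacity of this channel, which I would check in turn.

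For Properties 1 and 4 I would argue directly. The bound $0 \le I(M;Y) \le H(Y) \le \log_2 c$ holds because mutual information is nonnegative and $Y$ ranges over $c$ symbols; exponentiating gives $1 \le m_C(\calA) \le c$. For monotonicity, if $\calA \subseteq \calB$ then any input distribution $P_M$ supported on $\calA$ extends to a feasible distribution for the larger channel by placing zero mass on $\calB \setminus \calA$, so the supremum defining $C(\calB)$ is taken over a superset of feasible distributions; thus $C(\calA) \le C(\calB)$ and $m_C(\calA) \le m_C(\calB)$.

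For Property 2, I would use that $C(\calA) = 0$ precisely when $Y$ and $M$ are independent under every input distribution $P_M$, which (since $I \ge 0$, so the supremum vanishes iff it vanishes identically) holds if and only if all rows of the channel coincide. Since $\calA$ is a set of distributions, coinciding rows means $|\calA| \le 1$ (with the convention $C(\emptyset) = 0$). Exponentiating, $m_C(\calA) = 1 \iff |\calA| \le 1$.

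Property 3 is where the real work lies. The forward direction is immediate: if $\be_1, \dots, \be_c \in \calA$, taking $P_M$ uniform over these $c$ deterministic rows yields $H(Y) = \log_2 c$ and $H(Y \mid M) = 0$, so $C(\calA) = \log_2 c$ and $m_C(\calA) = c$. For the converse I would examine the equality conditions in $C(\calA) = I(M;Y) = H(Y) - H(Y \mid M) \le H(Y) \le \log_2 c$: equality throughout forces $H(Y \mid M) = 0$ and $H(Y) = \log_2 c$ simultaneously, so every row in the support of the optimal input is deterministic (a corner $\be_k$) and the output marginal is uniform, which requires all $c$ corners to appear, i.e.\ $\be_k \in \calA$ for all $k \in [c]$. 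The main obstacle is that this argument presumes the supremum over $P_M$ is attained, so that the equality conditions genuinely hold rather than merely being approached. For finite $\calA$ attainment is automatic by compactness of the input simplex and continuity of $I(M;Y)$; for a general countable $\calA$ one must rule out configurations whose rows accumulate at the corners without containing them, and I would close this gap by invoking that the multiplicity set $\calM_\epsilon(\bx_i)$ is the continuous image of the (compact) Rashomon set and is therefore closed. This compactness/attainment step is the crux I would handle most carefully.
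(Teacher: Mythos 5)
Your proposal is correct and follows essentially the same route as the paper's proof: identify $C(\calM_\epsilon(\bx_i))$ with the capacity of the channel $P_{Y|M}$ whose rows are the scores, bound it via $0 \le I(M;Y) \le \log_2 c$ for Property 1, exhibit the extremal input distributions for the forward directions of Properties 2 and 3, read the converses off the equality conditions for a capacity-achieving $P_M$, and get Property 4 from enlarging the feasible set. If anything you are slightly more careful than the paper: your monotonicity argument (extending $P_M$ by zero mass on $\calB\setminus\calA$) is cleaner than the paper's term-by-term comparison of expected divergences, and you explicitly flag the attainment of the capacity-achieving distribution, which the paper sidesteps by assuming $|\calM_\epsilon(\bx_i)|<\infty$ ``for clarity.''
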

\begin{proof}
For clarity, we assume $|\calM_\epsilon(\bx_i)| = m$.
By the information inequality \citep[Theorem~2.6.3]{cover1999elements} the mutual information $I(M; Y)$ between the random variables $M$ and $Y$ (defined in Section~\ref{sec:rashomon-capacity}) is non-negative, i.e., $I(M; Y) \geq 0$.
Moreover, since $I(M; Y) = H(Y) - H(Y|M) \leq \log c$, we have $0 \leq C(\calM_\epsilon(\bx_i)) \leq \log c$, and therefore $1 \leq 2^{C(\calM_\epsilon(\bx_i))} \leq c$ (since we pick the log base to be 2).
If all rows in the transition matrix are identical, $Y$ is independent of $M$, and $H(Y|M) = \sum_{j=1}^m p_m(j) H(Y|M=j) = \sum_{j=1}^m p_m(j) H(Y) = H(Y)$; thus, $C(\calM_\epsilon(\bx_i)) \leq H(Y) - H(Y|M) = 0$ and $m_C(\bx_i) = 2^{C(\calM_\epsilon(\bx_i))} = 1$.
On the other hand, we denote the $c$ models in $\calR(\calH, \epsilon)$ which output scores are the ``vertices'' of $\Delta_c$ to be $m_1, \cdots, m_c$, then $H(Y|M=m_k) = 0,\; \forall k\in[c]$. $H(Y|M)$ is minimized to 0 by setting the weights $p_m$ on those c models to be $\frac{1}{c}$ and the rest to be $0$. 
Thus, $C(\calM_\epsilon(\bx_i)) = H(Y) - H(Y|M) = H(Y)$ and $m_C(\bx_i) = 2^{C(\calM_\epsilon(\bx_i))} = c$.
Finally, let $\calM_\epsilon^1(\bx_i) \subseteq \calM_\epsilon^2(\bx_i)$ with random variables $M_1$ and $M_2$ respectively.
Without loss of generality, assume that $\calM_\epsilon^1(\bx_i) = \{h_1(\bx_i), \cdots, h_r(\bx_i)\}$ and $\calM_\epsilon^2(\bx_i) = \calM_\epsilon^1(\bx_i) \cup \{h_{r+1}(\bx_i)\}$, and we have
\begin{equation}
\begin{aligned}
    \E_{h\sim P_{M_2}} D(h(\bx_i)\|\bq) &= \sum_{i=1}^{r+1} P_{M_2}(h_i) D_{KL}(h_i(\bx_i)\|\bq)\\
    &= \sum_{i=1}^{r} P_{M_2}(h_i) D_{KL}(h_i(\bx_i)\|\bq) + P_{M_2}(h_{r+1}) D_{KL}(h_{r+1}(\bx_i)\|\bq)\\
    &\geq \sum_{i=1}^{r} P_{M_2}(h_i) D_{KL}(h_i(\bx_i)\|\bq) = \E_{h\sim P_{M_1}} D(h(\bx_i)\|\bq).
\end{aligned}
\end{equation}
Therefore, $I(M_1; Y) = \inf_{P_{M_1}} \E_{h\sim P_{M_1}} D(h(\bx_i)\|\bq) \leq \inf_{P_{M_2}} \E_{h\sim P_{M_2}} D(h(\bx_i)\|\bq) = I(M_2; Y)$, and
\begin{equation}
\begin{aligned}
    I(M_1; Y) \leq I(M_2; Y) &\Rightarrow \sup\limits_{M_1} I(M_1; Y) \leq \sup\limits_{M_2} I(M_2; Y)\\
    &\Rightarrow C(\calM_\epsilon^1(\bx_i)) \leq C(\calM_\epsilon^2(\bx_i))\\
    &\Rightarrow m_C(\calM_\epsilon^1(\bx_i)) \leq m_C(\calM_\epsilon^2(\bx_i)),
\end{aligned}
\end{equation}
since the power function is monotonic.

We now prove the converse statements. Assume $m(\bx_i)=c$ and, thus, $C(\calM_\epsilon(\bx_i))=\log c$. Let $P_M$ be the capacity-achieving distribution over models. Then $I(M;Y)=\log c$ and, from non-negativity of entropy and the fact that the uniform distribution maximizes entropy, $H(Y)=c$ and $H(Y|M)=0$. Consequently, again from non-negativity of entropy, $H(Y|M=m)=0$ for all $m\in \mathsf{supp}(P_M),$ and thus $P_{Y|M=m}$ is an indicator function (i.e., given $M$, $Y$ is constant w.p.1). Since $H(Y)=c,$ the result follows.

Finally, let $C(\calM_\epsilon(\bx_i))=0$ and $P_M$ be the capacity-achieving input distribution. Then $Y$ and $M$ are independent and, thus, $P_{Y|M=m}=P_Y$ (i.e., all scores are identical) for all values $m\in \mathsf{supp}(P_M),$. Since this holds for the capacity-achieving $P_M$, which in turn is the maximimum across input distributions, the converse result follows.
\end{proof}

\subsection{Proof of Proposition~\ref{prop:sampling-rashomon-set}}\label{appendix-proof:prop2}
\begin{prop}
For each sample $\bx_i\in \calD$, there exists a subset  $\mathcal{A}\subseteq\calM_\epsilon(\bx_i)$ with $|\mathcal{A}|\leq c$  that fully captures the spread in scores for $\bx_i$ across the Rashomon set, i.e.,  $m_C(\bx_i) = 2^{C(\mathcal{A})}$. In particular, there are at most $c$ models in $\mathcal{R}(\calH,\epsilon)$ whose output scores yield the same Rashomon Capacity for $\bx_i$ as the entire Rashomon set.
\end{prop}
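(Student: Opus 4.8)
The plan is to exploit the identification, noted just before the proposition, of $C(\calM_\epsilon(\bx_i))$ with the capacity of the channel $P_{Y|M}$ whose rows are the scores in $\calM_\epsilon(\bx_i)$, and then to thin out a capacity-achieving input distribution using Carath\'eodory's theorem. First I would invoke the existence of a capacity-achieving weight distribution $P_M^\star$ over the Rashomon set, which induces an output distribution $\bq^\star \triangleq \E_{h\sim P_M^\star}[h(\bx_i)] \in \Delta_c$; for a discrete (or compact) $\calM_\epsilon(\bx_i)$ this is guaranteed, since the optimization in \eqref{eq:rashomon-capacity} ranges over a compact set of output distributions inside the simplex. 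This $\bq^\star$ is exactly the minimizing ``centroid'' $\bq$ in \eqref{eq:rashomon-capacity}, so that $C(\calM_\epsilon(\bx_i)) = \E_{h\sim P_M^\star} D_{KL}(h(\bx_i)\|\bq^\star)$.

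The crucial structural input is the Karush--Kuhn--Tucker characterization of channel capacity (Gallager's conditions): $P_M^\star$ is capacity-achieving if and only if $D_{KL}(h(\bx_i)\|\bq^\star)=C(\calM_\epsilon(\bx_i))$ for every model $h$ in the support of $P_M^\star$, and $D_{KL}(h(\bx_i)\|\bq^\star)\leq C(\calM_\epsilon(\bx_i))$ otherwise. Thus every score carrying positive weight sits at the same KL-radius from $\bq^\star$, and $\bq^\star$ lies in the convex hull of these support scores, all of which are points of $\Delta_c$.

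Now I would apply Carath\'eodory's theorem. Since $\Delta_c$ is contained in the affine hyperplane $\{\bz : \sum_k [\bz]_k = 1\}$, its affine dimension is $c-1$, so any point in a convex hull of a subset of $\Delta_c$---in particular $\bq^\star$---is a convex combination of at most $(c-1)+1 = c$ of the support scores. Call these $h_1(\bx_i),\dots,h_c(\bx_i)$ with convex weights $\lambda_1,\dots,\lambda_c$, and set $\mathcal{A}=\{h_1(\bx_i),\dots,h_c(\bx_i)\}$ together with the distribution $\tilde P_M(h_j)=\lambda_j$. By construction $\tilde P_M$ induces the same centroid $\bq^\star$, and because each selected score lies in the support of $P_M^\star$ (hence has KL-radius exactly $C(\calM_\epsilon(\bx_i))$), the induced mutual information is $\sum_j \lambda_j\, C(\calM_\epsilon(\bx_i)) = C(\calM_\epsilon(\bx_i))$. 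This shows $C(\mathcal{A}) \geq C(\calM_\epsilon(\bx_i))$; the reverse inequality is immediate from the monotonicity property (Property~4 of Definition~\ref{def:metric}, already established in the proof of Proposition~\ref{prop:rashomon-capacity}) since $\mathcal{A}\subseteq\calM_\epsilon(\bx_i)$. Hence $C(\mathcal{A}) = C(\calM_\epsilon(\bx_i))$ and $m_C(\bx_i)=2^{C(\mathcal{A})}$, with the at most $c$ originating models forming the desired subset of $\calR(\calH,\epsilon)$.

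The main obstacle I anticipate is not the Carath\'eodory step---which is routine once the affine dimension of $\Delta_c$ is pinned down---but rather securing the two ingredients that feed it: the existence of a capacity-achieving $P_M^\star$ and the KKT equality on its support. For finite or compact $\calM_\epsilon(\bx_i)$ both are standard, but for an infinite Rashomon set one must argue that the supremum in \eqref{eq:rashomon-capacity} is attained---e.g., by passing to the closed convex hull of $\calM_\epsilon(\bx_i)$, which is compact inside the simplex---and that the KKT conditions still characterize the optimum. Some care is also needed to ensure the $c$ selected scores correspond to genuine models in $\calR(\calH,\epsilon)$ rather than to limit points; restricting to the discrete setting used for the Blahut--Arimoto computation removes this concern entirely.
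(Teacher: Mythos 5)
Your proposal is correct, and it takes a genuinely different---and in fact more careful---route than the paper's own proof. The paper applies Carath\'eodory's theorem directly to the scores: it notes that any point of $\Delta_c$ is a convex combination of at most $c$ scores and then asserts that adding a score lying in the convex hull of those $c$ scores does not change the capacity. As written, that argument has a gap: Carath\'eodory applied pointwise gives a (possibly different) set of $c$ scores for each point, and there is no single set of $c$ elements of $\calM_\epsilon(\bx_i)$ whose convex hull is guaranteed to contain the whole multiplicity set (think of the five vertices of a pentagon in a two-dimensional simplex). Your argument closes exactly this gap by moving the Carath\'eodory step to the right object: the capacity-achieving output distribution $\bq^\star$. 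The Kuhn--Tucker/Gallager conditions put every supported score at KL-radius exactly $C(\calM_\epsilon(\bx_i))$ from $\bq^\star$, so once $\bq^\star$ is re-expressed as a convex combination of at most $c$ supported scores, the thinned input distribution reproduces both the centroid and the mutual information, and monotonicity supplies the reverse inequality. This is the classical ``support size at most $|\calY|$'' argument for channel capacity, and it is the correct way to make the proposition rigorous. Your caveats are also well placed: existence of $P_M^\star$ and the validity of the KKT equalities are immediate for finite $\calM_\epsilon(\bx_i)$ (the setting the paper ultimately restricts to for computation) but require a compactness argument for infinite Rashomon sets, and one must be slightly careful that support points of $P_M^\star$ are genuine elements of $\calM_\epsilon(\bx_i)$ rather than limit points. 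The only thing your write-up buys at some cost is reliance on the KKT characterization, which the paper never states; if you want the proof to be self-contained you should cite or reproduce Gallager's conditions explicitly.
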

\begin{proof}
Carath\'eodory's theorem \citep{caratheodory1911variabilitatsbereich} states that if a point $x$ of $\R^d$ lies in the convex hull of a set $\calX$, then x can be written as the convex combination of at most $d+1$ points in $\calX$. 
Namely, there is a subset $\calX'$ of $\calX$ consisting of $d+1$ or fewer points such that $x$ lies in the convex hull of $\calX'$.

In our case, we consider the random variable $M$ of the Rashomon set $\calR(\calH, \epsilon)$ and $Y = \Delta_c \triangleq \{\bg \in \R^c; \sum_{i=k}^c [\bg]_k = 1,\forall k\; [\bg]_k \geq 0 \}$ is a $(c-1)$ dimensional space\footnote{In the main paper, we say $\Delta_c$ is a c-dimensional probability simplex, but it does not mean $\Delta_c$ is a c dimensional space. In fact, $\Delta_c$ is a $(c-1)$ dimensional space.}. 
We assume $|\calR(\calH, \epsilon)| = m$ in the following proof, but $\calR(\calH, \epsilon)$ could contain arbitrary large (or infinite) amount of output scores from the models in the Rashomon set.
There are also m output scores $\{h_1(\bx_i), \cdots, h_m(\bx_i)\}\in\Delta_c$ in the $\epsilon$-multiplicity set $\calM_\epsilon(\bx_i)$ for each sample $\bx_i \in \calD$.
By Carath\'eodory's theorem, since $\Delta_c$ is $(c-1)$ dimensional and is convex, any score $h(\bx_i)$ can be expressed by the convex combination of $(c-1)+1 = c$ scores. 
Moreover, let the $c$ scores be $\{h_1(\bx_i), \cdots, h_c(\bx_i)\}$, since Rashomon Capacity measures the spread of the scores, adding any score $h(\bx_i) \in \convexhull(h_1(\bx_i), \cdots, h_c(\bx_i))$ to the channel constructed by $\{h_1(\bx_i), \cdots, h_c(\bx_i)\}$ would not affect Rashomon Capacity.
\end{proof}

\section{Additional details}
\subsection{Predictive multiplicity: fairness, reproducibility, and security}\label{appendix:explanation}
Predictive multiplicity and the Rashomon effect are related to individual fairness \citep{dwork2012fairness, dwork2018individual}.
A mechanism $M: \calX \to \calY$ satisfies individual fairness if for every $x, x' \in \calX$, $D(M(x), M(x')) \leq d(x, x')$, where $d$ and $D$ are metrics on $\calX$ and $\calY$ respectively. 
It is also called the $(D, d)$-Lipschitz property.
Individual fairness aims to ensure that ``similar individuals are treated similarly.'' The consequence of predictive multiplicity is that \emph{the same} individual can be treated differently due to arbitrary and unjustified choices made during the training process (e.g., parameter initialization, random seed, dropout probability, etc.).
Integrating predictive multiplicity and individual fairness could results in a more thorough formulation for fair machine learning.

Predictive multiplicity allows different predictions from competing classifiers for the samples.
Thus predictive multiplicity could lead to different decision regions when training with the same dataset and achieving similar performance, and makes it hard for a machine learning practitioner to reproduce the decision regions if a different initialization of a classifier is selected.
\citet{somepalli2022can} studied the reproducibility of decision regions of almost-equally performing learning models, and observe that changes in model architecture (which reflect the inductive bias) lead to visible changes in decision regions.
Notably, neural networks with very narrows or wide layers have better reproducibility in their decision regions.
On the other hand, neural networks with ``moderate'' number of neurons in each layer have decision regions fragmented into many small pieces, and are harder to reproduce.
The connection between predictive multiplicity, neural network architectures, and inductive bias is also an interesting research direction.
For example, a stronger inductive bias could restrict the arbitrariness of a training process, leading to smaller predictive multiplicity.

The fact that  multiple classifiers may yield distinct predictions to a target a sample while having statistically identical average loss performance can also cause security issues in machine learning.
The score variation could result from a malicious learner/designer who either plants an undetectable backdoor or  carefully selects a specific model. This may result in intentional manipulation of the output scores for a sample without detectable performance changes \citep{goldwasser2022planting}.

\subsection{Predictive multiplicity with small Rashomon parameters}\label{appendix:small-epsilon}
Note that $\epsilon = 0$ implies prefect generalization to the test set, and is in general infeasible due to a limited number of samples and optimization techniques.
Moreover, a small $\epsilon$ could lead to high predictive multiplicity. 
Consider a classification task with $1000$ samples with binary classes, and trained with the 0-1 loss. 
Suppose $\epsilon = 0.001$, i.e., only allowing one sample $\bx_i$ at a time to be misclassified.
If the hypothesis space is tremendous, it is possible to find a classifier that only assign the wrong label to any $\bx_i$, and thus the ambiguity in \eqref{eq:ambiguity-discrepancy} could be 1. 

\subsection{Metrics for the spread of scores}\label{appendix:other-metrics}
The divergence measure between two distributions used in \eqref{eq:score-spread} is not restricted to KL-divergence.
For example, given a convex function $f:(0,\infty)\to \R$ satisfying $f(1)=0$, and assume that $P$ and $Q$ are two probability distributions over a set $\calX$, and $P$ is absolutely continuous with respect to $Q$.
The $f$-divergence between $P$ and $Q$ is given by \citep{csiszar1995generalized}
\begin{equation}\label{Def:f_KL}
    D_f(P\|Q)\triangleq \mathbb{E}_{Q}\left[f\left(\frac{P(X)}{Q(X)}\right) \right].
\end{equation}
Different choices of $f$ lead to different divergence; for example, 
if $f(t) = t\log t$, $D_f(P\|Q) = D_{KL}(P\|Q)$; if $f(t) = (t-1)^2$, $D_f(P\|Q) = \chi^2(P\|Q)$ is the chi-square divergence; if $f(t) = t\log t - (1+t)\log (1+t)/2$,  $D_f(P\|Q) = D_{JS}(P\|Q)$ is the Jensen-Shannon divergence.
Another example of a tunable probability divergence is the R\'enyi divergence $R_\alpha(P\|Q)$ of order $\alpha \in \R^+/\{1\}$, defined as \citep{renyi1961measures}
\begin{equation}
    D_\alpha(P\|Q) \triangleq \frac{1}{\alpha-1} \log \left( \sum_x \left(\frac{P(x)}{Q(x)}\right)^\alpha Q(x) \right),
\end{equation}
Its continuous extensions for $\alpha = 1$ and $\infty$ can also be defined.
In particular, for $\alpha = 1$, the R\'enyi divergence recovers KL divergence, and for $\alpha = \infty$, $D_\infty(P\|Q) = \max_x \log P(x)/Q(x)$ is called the max-divergence.
Both the $f$-divergence and R\'enyi divergence generalize the usual notion of KL-divergence used in this paper, and these families of divergences could also be used to measure the spread of the scores in the probability simplex. 
For example, \citet{nielsen2020generalization} reported an iterative algorithm to numerically compute a centroid for a set of probability densities measured by the Jensen–Shannon divergence.
However, these generalizations of the KL divergence do not necessarily lead to multiplicity metrics that satisfy the properties outlined in Definition~\ref{def:metric}.
More importantly, when taking the supremum over the input distributions (see~\eqref{eq:rashomon-capacity}), we are unaware of a procedure as simple as  the Blahut-Arimoto algorithm to estimate the corresponding Rashomon Capacity if the probability divergence is not the KL-divergence. Exploring alternative metrics for measuring score ``spread'' is a promising future research direction.

\subsection{Geometric interpretation of Rashomon Capacity}\label{appendix:geometric-interpretation}
In Section~\ref{sec:rashomon-capacity}, we introduce the Rashomon capacity to measure the spread of scores from a geometric viewpoint.
Here, we further discuss the pleasing geometric interpretations possessed by Rashomon Capacity, which can be found in information theory.
Particularly, given a sample $\bx_i$, let the information radius $\rad(\calM_\epsilon(\bx_i))$ and information diameter $\diam(\calM_\epsilon(\bx_i))$ of the $\epsilon$-multiplicity set $\calM_\epsilon(\bx_i)$ be \citep{wu2017lecture}
\begin{equation}\label{eq:raius-diameter}
 \rad(\calM_\epsilon(\bx_i)) = \inf\limits_{\bq\in\Delta_c} \sup\limits_{\bp \in \calM_\epsilon(\bx_i)} D_{KL}(\bp\|\bq),\; \diam(\calM_\epsilon(\bx_i)) = \sup\limits_{\bp, \bp' \in \calM_\epsilon(\bx_i)} D_{KL}(\bp\|\bp'),
\end{equation}
we have 
\begin{equation}\label{eq:raius-diameter-inequality}
    C(\calM_\epsilon(\bx_i)) \leq \rad(\calM_\epsilon(\bx_i)) \leq \diam(\calM_\epsilon(\bx_i)),
\end{equation}
where $C(\calM_\epsilon(\bx_i)) = \rad(\calM_\epsilon(\bx_i))$ if $\calM_\epsilon(\bx_i)$ is a convex set \citep{kemperman1974shannon}.

The proof of \eqref{eq:raius-diameter-inequality} is straightforward:
\begin{equation}
\begin{aligned}
    C(\calM_\epsilon(\bx_i)) 
    &= \sup\limits_{\boldsymbol{\alpha}\in\Delta_m} \inf\limits_{\bq\in\Delta_c} \sum_{j=1}^m \alpha_j D_{KL}(\bp_j\|\bq)\\
    &\leq \inf\limits_{\bq\in\Delta_c} \sup\limits_{\boldsymbol{\alpha}\in\Delta_m} \sum_{j=1}^m \alpha_j D_{KL}(\bp_j\|\bq)\\
    &= \inf\limits_{\bq\in\Delta_c} \sup\limits_{\boldsymbol{\alpha}\in\Delta_m} \E_{m\sim\boldsymbol{\alpha}} D_{KL}(P_{Y|M=m}\|\bq)\\
    &\leq  \inf\limits_{\bq\in\Delta_c} \sup\limits_{\bp \in \calM_\epsilon(\bx_i)} D_{KL}(\bp\|\bq) \triangleq \rad(\calM_\epsilon(\bx_i))\\
    &\leq \sup\limits_{\bp, \bp' \in \calM_\epsilon(\bx_i)} D_{KL}(\bp\|\bp') \triangleq \diam(\calM_\epsilon(\bx_i)).
\end{aligned}
\end{equation}

At first glance, the information radius or diameter seem to be more intuitive metrics to measure the ``spread'' of the all possible scores from the Rashomon set; however, both of them do not satisfy the properties in Definition~\ref{def:metric}.
More importantly, \eqref{eq:raius-diameter-inequality} shows that Rashomon capacity is a tighter metric, and is less likely to overestimate the spread of scores, i.e., the predictive multiplicity. 
Moreover, maximizing the KL divergence is in general an ill-posed problem since the KL divergence is (jointly) convex, and could diverge to infinity.
\citet{zhang2019theoretically} demonstrated that maximizing the KL divergence between the scores generated by a classifier with two different samples is solvable if the Euclidean distance between the two samples is upper bounded. 
In our case, we do not have  control over $\bp, \bp' \in \calM_\epsilon(\bx_i)$ and the underlying models that output $\bp, \bp'$ since two models could be very different from each other (in terms of, e.g., the Euclidean distance of the model parameters), but still yield similar test loss due to the existence of multiple local minima.

\subsection{The Blahut-Arimoto algorithm}\label{appendix:ba}
For the sake of completeness, we describe the Blahut-Arimoto (BA) algorithm \citep{blahut1972computation, arimoto1972algorithm} used in Section~\ref{sec:exp} for computing channel capacity.
For a discrete memoryless channel (DMC) $X \to Y$ with transition probabilities $P_{Y|X}$ and input probability $Q$, where $\calX= [1, \cdots, m]$ and $\calY = [1, \cdots, c]$.
The mutual information $I(X; Y)$ between $X$ and $Y$ is defined as 
\begin{equation}
    I(X; Y) \triangleq \sum_{i=1}^m \sum_{j=1}^c P_{X, Y}(i, j) \log \frac{P_{X, Y}(i, j)}{Q(i)P_Y(j)} = \sum_{i=1}^m \sum_{j=1}^c P_{Y|X}(j|i) Q(i) \log \frac{P_{X|Y}(i|j)}{Q(i)}.
\end{equation}
By definition, the capacity of the channel $P_{Y|X}$ is defined as
\begin{equation}
\begin{aligned}\label{eq:channel-capacity}
C(P_{Y|X}) &= \max\limits_{Q} I(X; Y) = \max\limits_{Q} \sum_{i=1}^m \sum_{j=1}^c P_{Y|X}(j|i) Q(i) \log \frac{P_{X|Y}(i|j)}{Q(i)},
\end{aligned}
\end{equation}
where $P_{X|Y}(i|j) = \frac{P_{Y|X}(j|i)Q(i)}{\sum_k P_{Y|X}(j|k)Q(k)}$.
Since $P_{X|Y}(i|j)$ can be viewed as a function of the channel $P_{Y|X}$ and $Q$, from \eqref{eq:channel-capacity}, it is clear that for a fixed channel $P_{Y|X}$, the channel capacity is a convex function of the input probabilities $Q$.
Denote any $P_{X|Y}(i|j) = \Phi(i|j)$, we can alternatively express the mutual information as
\begin{equation}
    I(X; Y) = \sum_{i=1}^m \sum_{j=1}^c P_{Y|X}(j|i) Q(i) \log \frac{\Phi(i|j)}{Q(i)} = J(Q, \Phi).
\end{equation}
It can be proven that \citep{blahut1972computation, arimoto1972algorithm}
\begin{enumerate}
    \item For a fixed $Q$, $J(Q, \Phi) \leq J(Q, P_{X|Y})$, i.e., $J(Q, P_{X|Y}) = \max_{\Phi} J(Q, \Phi)$, and therefore $C(P_{Y|X}) = \max_Q\max_{\Phi} J(Q, \Phi)$.
    \item For a fixed $\Phi$, $J(Q, \Phi) \leq \log \left( \sum_{i=1}^m r(i) \right)$, $r(i) = \exp\left[ \sum_{j=1}^c P_{Y|X}(j|i) \log \Phi(i|j) \right]$, where equality holds if and only if $Q(i) = r(i) / \sum_{k=1}^m r(k)$.
\end{enumerate}
The BA algorithm is built upon these two properties, and doubly maximizes $J(Q, \Phi)$.
More specifically, let $t$ be the iteration index, and let $Q^0$ be a choose initialization of the input distribution, for each iteration, we update $\Phi$ and $Q$ by
\begin{enumerate}
    \item $\Phi^{l+1}(i|j) = \frac{Q^l(i)P_{Y|X}(j|i)}{\sum_{k=1}^m Q^l(k)P_{Y|X}(j|k)}$, $\forall i, j$.
    \item $r^{l+1}(i) = \exp \left( \sum_{j=1}^c P_{Y|X}(j|i) \log \Phi^{l+1}(i|j) \right)$.
    \item $Q^{l+1}(i) = \frac{r^{l+1}(i)}{\sum_{k=1}^m r^{l+1}(k)}$.
    \item $J(Q^{l+1}, \Phi^{l+1}) = \log \left( \sum_{i=1}^m r^{l+1}(i) \right)$.
    \item $l = l + 1$.
\end{enumerate}
For the stopping criteria, let $c^{l}(i) = r^{l}(i) / Q^l(i)$, we have $J(Q^l, \Phi^l) = \log \left( \sum_{i=1}^m Q^l(i)c^{l}(i) \right)$.
Since $J(Q^l, \Phi^l)$ is the logarithm of the average of $c^{l}(i)$, we have
\begin{equation}
    \log \left( \sum_{i=1}^m Q^l(i)c^{l}(i) \right) \leq C(P_{Y|X}) \leq \max_i \log c^{l}(i),
\end{equation}
and therefore we update $Q^{l+1}(i)$ and $\Phi^{l+1}(i|j)$ until the stopping criteria is matched,
\begin{equation}
    \max_i \log c^{l}(i) - \log \left( \sum_{i=1}^m Q^l(i)c^{l}(i) \right) \leq \epsilon,
\end{equation}
where $\epsilon > 0$ is a pre-defined accuracy parameter.

The BA algorithm has also been extended to channels with continuous input and output alphabets, i.e., $|\calX| = \infty$ and $|\calY| = \infty$, based on sequential Monte-Carlo integration methods (i.e., particle filters) \citep{dauwels2005numerical, cao2013capacityp1, cao2013capacityp2, farsad2020capacities}. Since we deal with finite predicted classes and discrete Rashomon sets, Proposition~\ref{prop:sampling-rashomon-set} allows us to circumvent the use of more sophisticated variations of the BA algorithm.

\subsection{Adversarial weight perturbation on unregularized logistic regression}\label{appendix:awp-logistic}
In \eqref{eq:search-parameter}, we introduce an adversarial weight perturbation procedure to estimate Rashomon Capacity in the Rashomon set. 
In general, the problem in \eqref{eq:search-parameter} is difficult to analyze, and is usually optimized by using automated gradient computation tools such as Tensorflow \citep{abadi2016tensorflow}.
Here, we provide a special case of unregularized logistic regression, which gradient and Hessian can be analytically computed.
We start with a more general case by considering a set of features and labels $\{\bz_i, y_i\}_{i=1}^n$ for a binary classification problem, where $\bz_i \in \R^m$ and $y_i \in \{0, 1\}$.
Logistic regression assumes the output scores 
\begin{equation}
\begin{aligned}
    P(\hat{Y} = 1|Z=\bz_i; \bw) &= \frac{e^{\bz_i^\top \bw}}{1+e^{\bz_i^\top \bw}}\;\text{and}\\
    P(\hat{Y} = 0|Z=\bz_i; \bw) &= 1- P(\hat{Y} = 1|Z=\bz_i; \bw) = \frac{1}{1+e^{\bz_i^\top \bw}},
\end{aligned}
\end{equation}
where $\bw\in \R^m$ is the vector of weights.
The loss in logistic regression (without regularization) is defined as \citep{hastie2009elements}
\begin{equation}
\begin{aligned}
    \ell(\bw) &= - \sum_{i=1}^n \left( y_i \log P(\hat{Y} = 1|Z=\bz_i; \bw) + (1-y_i) \log (1-P(\hat{Y} = 1|Z=\bz_i; \bw)) \right)\\
    &= \sum_{i=1}^n \left( y_i\bw^\top\bz_i - \log (1+e^{\bw^\top \bz_i}) \right).
\end{aligned}
\end{equation}
Let $\bZ = [\bz_1, \cdots, \bz_n]^\top \in \R^{n\times m}$ be the feature matrix, $\by = [y_1, \cdots, y_n]^\top$ the label vector, $\bp = [P(\hat{Y} = 1|Z=\bz_1; \bw), \cdots, P(\hat{Y} = 1|Z=\bz_n; \bw)]$ be the score vector, and $\bW = \diag(\bw_1, \cdots, \bw_m)$ be the weight matrix with diagonal entries equal to $\bw$.
The gradient and Hessian of $\ell(\bw)$ with respect to $\bw$ can be expressed as
\begin{equation}\label{eq:logistic-gradient}
\begin{aligned}
\nabla \ell(\bw) &= -\sum_{i=1}^n \bz_i (y_i - P(\hat{Y} = 1|Z=\bz_i; \bw)) = \bZ^\top (\by - \bp),\;\text{and}\\
\nabla^2 \ell(\bw) &= -\sum_{i=1}^n \bz_i\bz_i^\top P(\hat{Y} = 1|Z=\bz_i; \bw)(1-P(\hat{Y} = 1|Z=\bz_i; \bw)) = -\bZ^\top \bW \bZ.
\end{aligned}
\end{equation}
We use the Newton–Raphson algorithm to update the weights, i.e.,
\begin{equation}
\begin{aligned}
    \bw^{t+1} &= \bw^t - \left( \nabla^2 \ell(\bw) \right)^{-1}\nabla \ell(\bw)\\
    &= \bw^t + \left( \bZ^\top \bW \bZ \right)^{-1} \bZ^\top (\by - \bp),
\end{aligned}
\end{equation}
where $t \in [1, T]$ is the index of the iterations, and $\bw^t$ is the weight at iteration t.
Note that the features $\bz_i$ could be kernel transformation of a sample $\bx_i$, logits outputed from a neural network of a sample $\bx_i$, or even the sample $\bx_i$ itself. When $\bz_i = \bx_i$, it is the vanilla logistic regression.

In order to perform adversarial weight perturbation on $\bw$ (i.e., to maximize scores of different classes in \eqref{eq:search-parameter}), for a target feature input $\bz_t$, when $y_t = 0$, we aim to maximize $\bw^\top \bz_t$ such that $P(\hat{Y} = 1|Z=\bz_i; \bw)$ is maximized.
Similarly, when $y_t = 1$, we aim to minimize $\bw^\top \bz_t$ such that $P(\hat{Y} = 0|Z=\bz_i; \bw)$ is maximized.
Therefore, we modify the gradient in \eqref{eq:logistic-gradient} to
\begin{equation}
\begin{aligned}\label{eq:adversarial-logistic}
    \nabla \ell(\bw) = \bZ^\top (\by - \bp) + \lambda_t \bz_t,
\end{aligned}
\end{equation}
where $\lambda_t$ is a regularization parameter, and $\lambda_t > 0$ if $y_t = 0$, and $\lambda_t < 0$ if $y_t = 1$. When $\lambda = 0$, \eqref{eq:adversarial-logistic} degenerates to \eqref{eq:logistic-gradient}.
Therefore, the adversarial weight perturbation on logistic regression could be performed by keep updating the weights with
\begin{equation}\label{eq:adversarial-update}
    \bw^{t+1} = \bw^t + \left( \bZ^\top \bW \bZ \right)^{-1}\left( \bZ^\top (\by - \bp) + \lambda_t \bz_t \right),
\end{equation}
until convergence.
The reason we introduce the features $\bz_i$ in the beginning instead of the samples $\bz_i$ if that if $\bz_i = f(\bx_i)$ for a neural network $f(\cdot)$, \eqref{eq:adversarial-update} can be used for last-layer weight perturbation of the neural network \citep{tsai2021formalizing}. 

\begin{figure}[!tb]
\centering
\includegraphics[width=.8\textwidth]{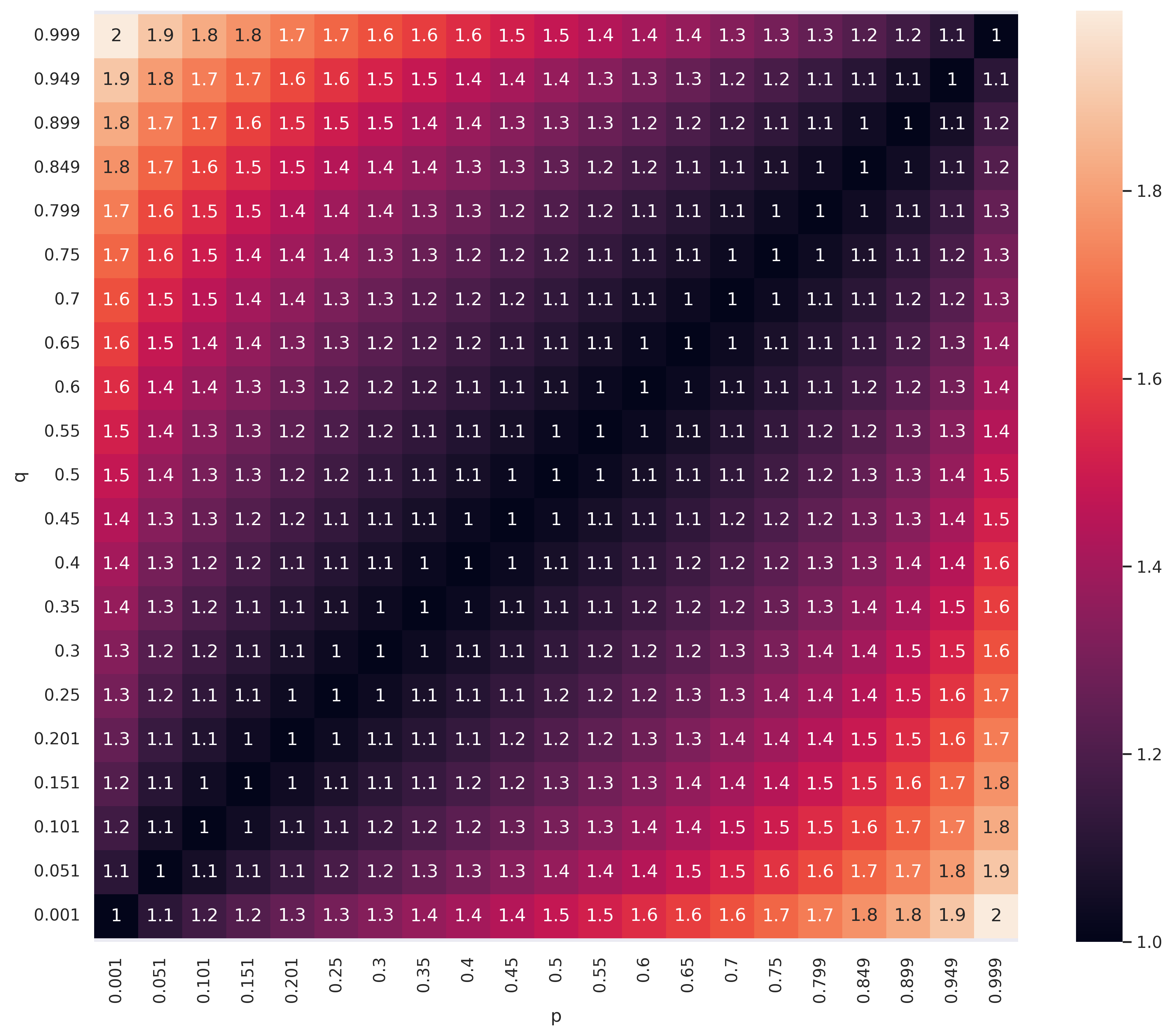}
\caption{Channel capacity (values annotated on the heatmap) of the binary asymmetric channel with different $p$ and $q$.}
\label{fig:capacity-convexity}
\end{figure}

\subsection{Convexity of the channel capacity}\label{appendix:convexity}
In the last paragraph of Section~\ref{sec:exp}, we mention an important limitation of KL-divergence based Rashomon Capacity due to the convexity of KL-divergence: in certain cases $C(\calM_\epsilon(\bx_i))$ (and therefore $m_C(\bx_i) = 2^{C(\calM_\epsilon(\bx_i)) }$) may seem small for already significant score variations across the classes.
Here, we use an example the binary asymmetric channel \citep{cover1999elements} to illustrate this phenomenon.
Given $p, q \in [0, 1]$, a binary asymmetric channel $X\to Y$ has a channel transition matrix $\bP = [[p, 1-p], [q, 1-q]] \in [0, 1]^{2\times 2}$. When $p = q$, the binary asymmetric channel matches the binary symmetric channel.
In Fig.~\ref{fig:capacity-convexity}, we show the channel capacity algorithm, with different pairs $(p, q)$. We observe that the channel capacity is a very ``flat'' convex function of $p$ and $q$; for example, when $p = 0.5$ and $q = 0.1$, the channel capacity is 1.3, and the channel capacity is larger than 1.8 if the difference $|p-q|$ is larger than 0.7.
When the channel transition matrix to be the estimated scores in a binary classification problem, Rashomon Capacity must be interpreted accordingly
For example, the difference of the scores of a sample for class 0 and class 1 needs to be larger than 0.7 such that the Rashomon Capacity exceeds 1.8. In fact, a Rashomon capacity above 1.1  already corresponds to a potentially significant score variation in practice.

\section{Datasets and experiments setups}\label{appendix:exp}
\subsection{Dataset descriptions and pre-processing procedures.}\label{appendix:data-preprocessing}
\paragraph{UCI adult dataset.}
The UCI Adult dataset \citep{Lichman:2013} contains multiple domestic factors including an individual’s education level, age, gender, occupation, and etc.
We drop missing values, and obtain 46447 samples with 20 features. 
The 20 features include the one-hot encoded version of the originally selected features [age, education, marital-status, relationship, race, gender, capital-gain, capital-loss, hours-per-week]; note that the features 'race' and 'gender' are binarized.
The label is the income, and is divide into two classes: <=50K and >50K.

\paragraph{COMPAS recidivism dataset.}
The COMPAS (Correctional Offender Management Profiling for Alternative Sanctions) dataset \citep{angwin2016machine} is a widely used algorithm for judges and parole officers to score criminal defendant’s likelihood of reoffending (i.e., recidivism).
The features include [age, charge degree, race, sex, priors crime count, days before screening/arrest, jail in date, jail out date], and the label is the binary prediction on recidivism.
We pre-processed the features by binarizing 'race', 'sex', 'charge degree' (felony or others), and 'days before screening/arrest' (<= 30 days or > 30 days); creating a new feature call 'length of stay', which is duration between 'jail in date' and 'jail out date'.
The resulting dataset has 52878 samples with 6 features.

\paragraph{HSLS dataset.}
The HSLS (High School Longitudinal Study) dataset \citep{ingels2011high} is collected from 23,000+ participants across 944 high schools in the USA, and it includes thousands of features such as student demographic information, school information, and students' academic performance across several years. 
We pre-processed the dataset (e.g., dropping rows with a significant number of missing entries and students taking repeated exams, performing k-NN imputation, normalization), and the number of samples reduced to 14,509 and the number of features is 59.
For the labels, we created a binary label Y from students’ $9^\text{th}$-grade math test score (i.e., top 50\% vs. bottom 50\%).

\paragraph{CIFAR-10 dataset.} 
The CIFAR-10 dataset \citep{krizhevsky2009learning} contains 50,000 colored images for training and 10,000 for test, where each images has 32 × 32 pixels, and has a label 10 classes [airplanes, cars, birds, cats, deer, dogs, frogs, horses, ships, and trucks].
The samples are distributed evenly on the 10 classes for both training and test set.

\subsection{Training details and experimental setups}\label{appendix:exp-setup}
For UCI Adult, COMPAS and HSLS datasets, the hypothesis space is composed of simple feed-forward neural networks with ReLU activations, and the optimizer is gradient descent trained with the whole datasets, and the training loss is the cross-entropy loss, and the learning rate is 0.001. 
For UCI Adult dataset, the neural networks have 5 layers/100 neurons per layer, and is trained with 100 epochs.
For COMPAS dataset, the neural networks have 5 layers/200 neurons per layer, and is trained with 200 epochs.
For HSLS dataset, the neural networks have 5 layers/200 neurons per layer, and is trained with 500 epochs.

For CIFAR-10 dataset, the hypothesis space is composed of VGG16 convolutional neural networks  \citep{simonyan2014very}, and the optimizer is stochastic gradient descent with batch size 40.
The VGG16 models are trained with the cross-entropy loss for 3 epochs and the learning rate is 0.001. 

\paragraph{Sampling.}
For UCI Adult, COMPAS and HSLS datasets, we did 5 repeated experiments with difference random seeds for 70\%/30\% train/test split, and in each experiments, we trained 100 models, and evaluated on the test set.
We select the smallest test loss, and select models that have test losses smaller than the smallest test loss plus the Rashomon parameter $\epsilon = [0.01, 0.02, 0.05, 0.1]$.
For CIFAR-10 dataset,  we did 2 repeated experiments with difference random seeds for 90\%/10\% train/test split, and in each experiments, we trained 50 models, and evaluated on the test set.
The mean accuracy for UCI Adult, COMPAS, HSLS and CIFAR-10 datasets are $0.8034$, $0.6540$, $0.6247$ and $0.8380$ respectively.
The Rashomon Capacity of all test samples can then be computed by the scores generated by the selected models for difference $\epsilon$, and the mean and standard errors of the largest 1\% and 5\% Rashomon Capacity, i.e., the statistics on the tails of the Rashomon Capacity, are reported in Fig.~\ref{fig:capacity-profiles}.

\paragraph{Adversarial weight perturbation.}
For UCI Adult, COMPAS and HSLS datasets, we did 3 repeated experiments with difference random seeds for 95\%/5\% train/test split, 90\%/10\% train/test split and 90\%/10\% train/test split respectively. 
We first trained a base classifier, and perturbed the weights of the neural networks for each test sample (cf.~\eqref{eq:search-parameter}) with learning rates 0.001 (for UCI Adult and COMPAS datasets) and 0.01 (for HSLS dataset). 
We require the perturbation procedure to stop updating the weights if either the perturbed scores exceed 0.9, or the test loss is larger than the base test loss plus the Rashomon parameter $\epsilon = [0.01, 0.02, 0.05, 0.1]$.
Similarly, for CIFAR-10 dataset, we did 2 repeated experiments with difference random seeds for 99\%/1\% train/test split.
The mean accuracy of the base classifiers for UCI Adult, COMPAS, HSLS and CIFAR-10 datasets are $0.8028$, $0.6458$, $0.7039$ and $0.8167$ respectively.
Therefore, for each sample, we computed the Rashomon Capacity of all test samples with scores from the base classifier and from the perturbed classifier.

\begin{algorithm}
\caption{Sampling with Rejection}\label{alg:sampling-RS}
\begin{algorithmic}
\Require training set $\calS$, test set $\calT$, number of models $m\in\mathbb{N}$, Rashomon parameter $\epsilon > 0$
\State SampledModel $\gets [~]$ 
\State TestLoss $\gets [~]$ 
\State RashomonSet $\gets [~]$ 
\State RashomonSetProb $\gets [~]$ 
\For{$i \in [m]$}
\State model $\gets$ train($\calS$, random\_seed=i)
\State loss $\gets$ evaluate(model, $\calT$)
\State SampledModel.append(model)
\State TestLoss.append(loss)
\EndFor
\For{$i \in [m]$}
\If{TestLoss[i] < min(TestLoss) + $\epsilon$}
\State RashomonSet.append(SampledModel[i])
\State RashomonSetProb.append([compute\_scores(SampledModel[i], $\calT$)])
\EndIf
\EndFor
\State \Return RashomonSet, RashomonSetProb
\end{algorithmic}
\end{algorithm}

\begin{algorithm}
\caption{Adversarial Weight Perturbation (AWP)}\label{alg:awp-RS}
\begin{algorithmic}
\Require dataset $\calD = \{\bx_i, \by_i\}_{i=1}^n$, pretrained model $f_\theta: \calX\to\Delta_c$ with weight $\theta$,  learning rate $\gamma$,  number of classes $c$
\State RashomonSetProb $\gets$ zeros(n, c, c)
\State BaseLoss $\gets$ evaluate($f_\theta$, $\calD$)
\For{$i \in [n]$}
\For{$j \in [c]$}
\State CurrentLoss $\gets$ evaluate($f_\theta$, $\calD$)
\While{CurrentLoss < BaseLoss + $\epsilon$}
\State scores $\gets$ $f_\theta(\bx_i)$
\State $\nabla \theta \gets \frac{\partial\text{-scores[j]}}{\partial \theta}$ 
\State $\theta \gets \theta + \gamma \nabla \theta$
\State CurrentLoss $\gets$ evaluate($f_\theta$, $\calD$)
\EndWhile
\State RashomonSetProb[i, j, :] $\gets$ $f_\theta(\bx_i)$
\EndFor
\EndFor
\end{algorithmic}
\end{algorithm}

\subsection{Algorithm boxes}
For the sake of clarify, we summarized the sampling and AWP algorithms to explore the Rashomon set and to compute Rashomon Capacity in Algorithm~\ref{alg:sampling-RS} and Algorithm~\ref{alg:awp-RS} respectively.
Both algorithms produce scores from models in the Rashomon set, where the scores are used later to compute the Rashomon Capacity by the Blahut-Arimoto algorithm (Section~\ref{appendix:ba}).

\section{Additional experiments}
\subsection{A Case study on the COMPAS dataset}\label{appendix:compas-case-study}
We trained 1k multi-layer perceptron classifiers with different random seeds and selected classifiers which have loss smaller than $0.685$ (the smallest loss observed was $0.68$), and compute Rashomon Capacity.
We show the samples with  Rashomon Capacity higher than 1.2 in the COMPAS dataset in Table~\ref{table:compas-case-study}.
Observe that the samples with conflicting scores are mostly with sex 1 (marked as Male in the dataset) and  numerous prior convictions (i.e., prior counts or length of stay). Thus, one must recommend caution when evaluating input samples with this profile. 
This example showcases how a stakeholder can zoom into samples with high Rashomon Capacity and flag them for further investigation.

\begin{table}[!t]
  \caption{Samples with high Rashomon Capacity in the COMPAS dataset.}
  \label{table:compas-case-study}
  \centering
  \begin{tabular}{cccccccc}
    \toprule
    Age     & Charge Degree     & Race & Sex & Prior Counts & Length of Stay & Max Score & Min Score \\
    \midrule
    25 & 1 & 0 & 1 & 5 & 35 & 0.498 & 0.194\\ 
    49 & 0 & 1 & 0 & 2 & 82 & 0.886 & 0.270 \\ 
    58 & 0 & 1 & 1 & 2 & 83 & 0.885 & 0.224 \\ 
    45 & 0 & 0 & 1 & 20& 46 & 0.354 & 0.030 \\ 
    40 & 0 & 0 & 1 & 24& 1 & 0.453 & 0.047 \\ 
    25 & 1 & 1 & 1 & 5 & 101 & 0.756 & 0.196 \\
    45 & 0 & 0 & 0 & 9 & 75 & 0.799 & 0.162 \\ 
    66 & 1 & 0 & 1 & 33& 13 & 0.489 & 0.014 \\ 
    37 & 0 & 0 & 1 & 3 & 80 & 0.867 & 0.301 \\
    58 & 1 & 1 & 1 & 7 & 185 & 0.987 & 0.343 \\ 
    53 & 1 & 1 & 1 & 9 & 117 & 0.890 & 0.199 \\ 
    29 & 0 & 0 & 1 & 1 & 99 & 0.930 & 0.416\\
    37 & 0 & 1 & 1 & 5 & 82 & 0.849 & 0.272\\ 
    37 & 0 & 0 & 1 & 22& 1 & 0.434 & 0.058\\ 
    52 & 1 & 0 & 1 & 7 & 117 & 0.921 & 0.251\\
    \bottomrule
  \end{tabular}
\end{table}

\subsection{Other methods to explore the Rashomon sets}\label{appendix:flipping-fgsm}
The procedure in \eqref{eq:search-parameter} reveals a desirable property of a Rashomon subset: it should include models with significant score variations. 
Similarly, a desirable Rashomon subset for accurately evaluating ambiguity/discrepancy is one with models that have most score disagreement.
Based on the observation, we briefly overview next  two alternative strategies for identifying a Rashomon subset: training with label flipping \citep{xiao2012adversarial} and fast gradient sign method (FSGM) \citep{goodfellow2014explaining}. 

\paragraph{Training with label flipping}
For training with label flipping, a classifier is trained with a sample whose label is adversarially corrupted to different classes, i.e., training $c$ classifiers for a sample of a $c$-class classification problem, with the goal of producing conflicting scores of the sample.
In Fig.~\ref{fig:explore-rashomon} (Left), we performed the label flipping procedure and report Rashomon Capacity with different Rashomon parameters $\epsilon$ on 1k random samples in the test set of COMPAS and HSLS datasets.
The accuracy of the base classifier and the mean accuracy of the classifiers trained with a flipped label are $0.68029$ and  $0.6660$ for COMPAS dataset, and $0.7337$ and $0.7324$ for HSLS dataset.
The Rashomon Capacity are all small since a miss classification of a single sample does not significantly influence the overall empirical risk.
Therefore, the classifiers are more likely to ignore the sample with a flipped label.

\paragraph{Fast gradient sign method (FSGM)}
The FSGM is different from the proposed adversarial weight perturbation in \eqref{eq:search-parameter} in two aspects. 
First, FSGM is applied to create an imperceivable perturbation on the samples instead of the weights.
Second, FSGM only uses the \emph{sign} of the gradient (times a scalar $\beta$) to update the weights.
We implemented FSGM on the weights to adversarially change the scores of 1k random samples in the test set of COMPAS and HSLS datasets, and report Rashomon Capacity in Fig.~\ref{fig:explore-rashomon} (Right).
Note that even with a small scalar $\beta = 0.0001$, the update on the weights---despite being imperceivable when added to input samples---could lead to a significant change of the loss when added to the model weights, and most classifiers updated with the FSGM would not belongs to the Rashomon set defined by the Rashomon parameter.
Therefore, Rashomon Capacity is almost 0, as observed in Fig.~\ref{fig:explore-rashomon} (Right).

\begin{figure}[t!]
     \centering
     \begin{subfigure}[b]{0.48\textwidth}
         \centering
         \includegraphics[width=\textwidth]{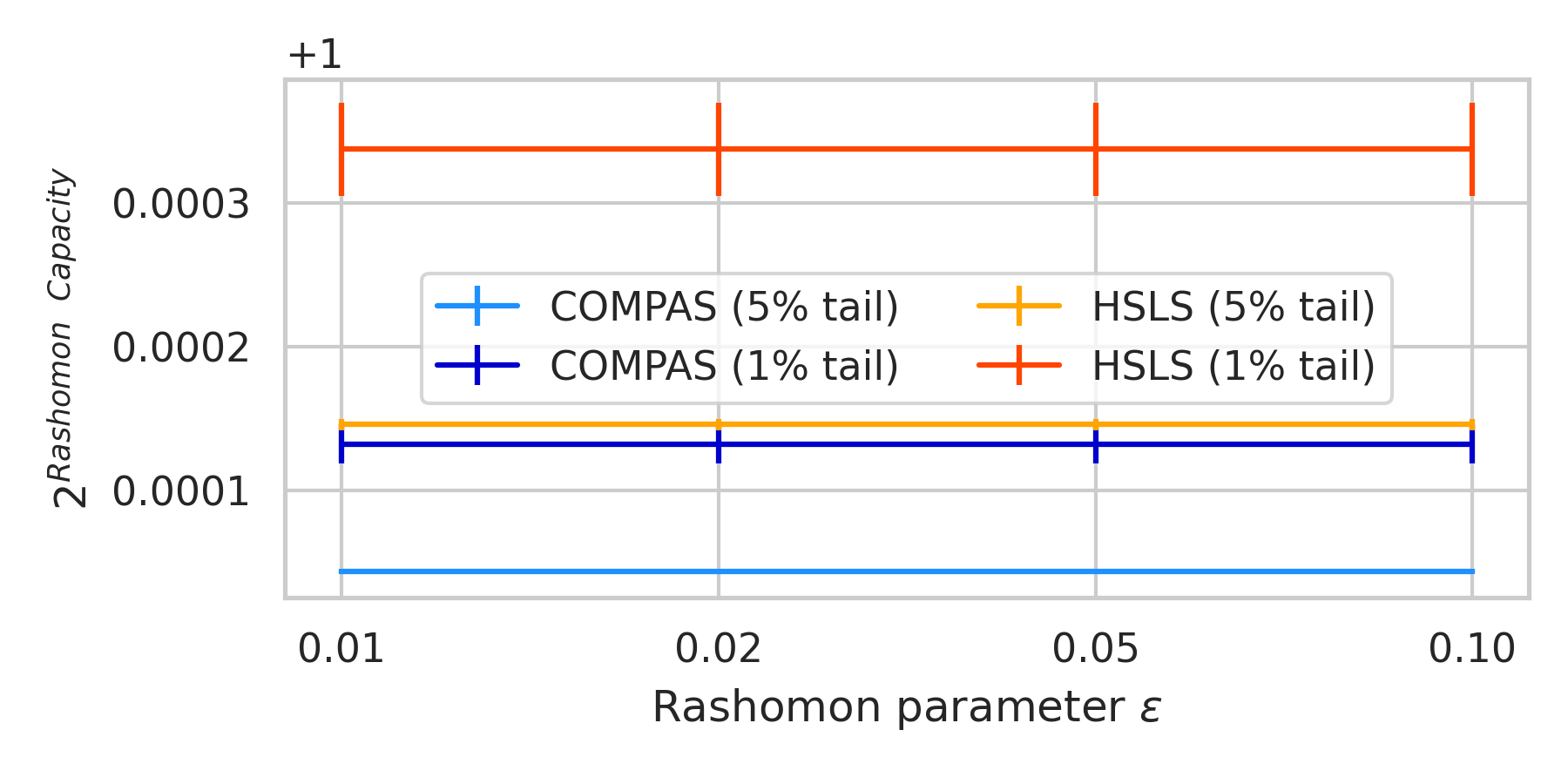}
         \caption{Training with label flipping}
     \end{subfigure}
     \begin{subfigure}[b]{0.48\textwidth}
         \centering
         \includegraphics[width=\textwidth]{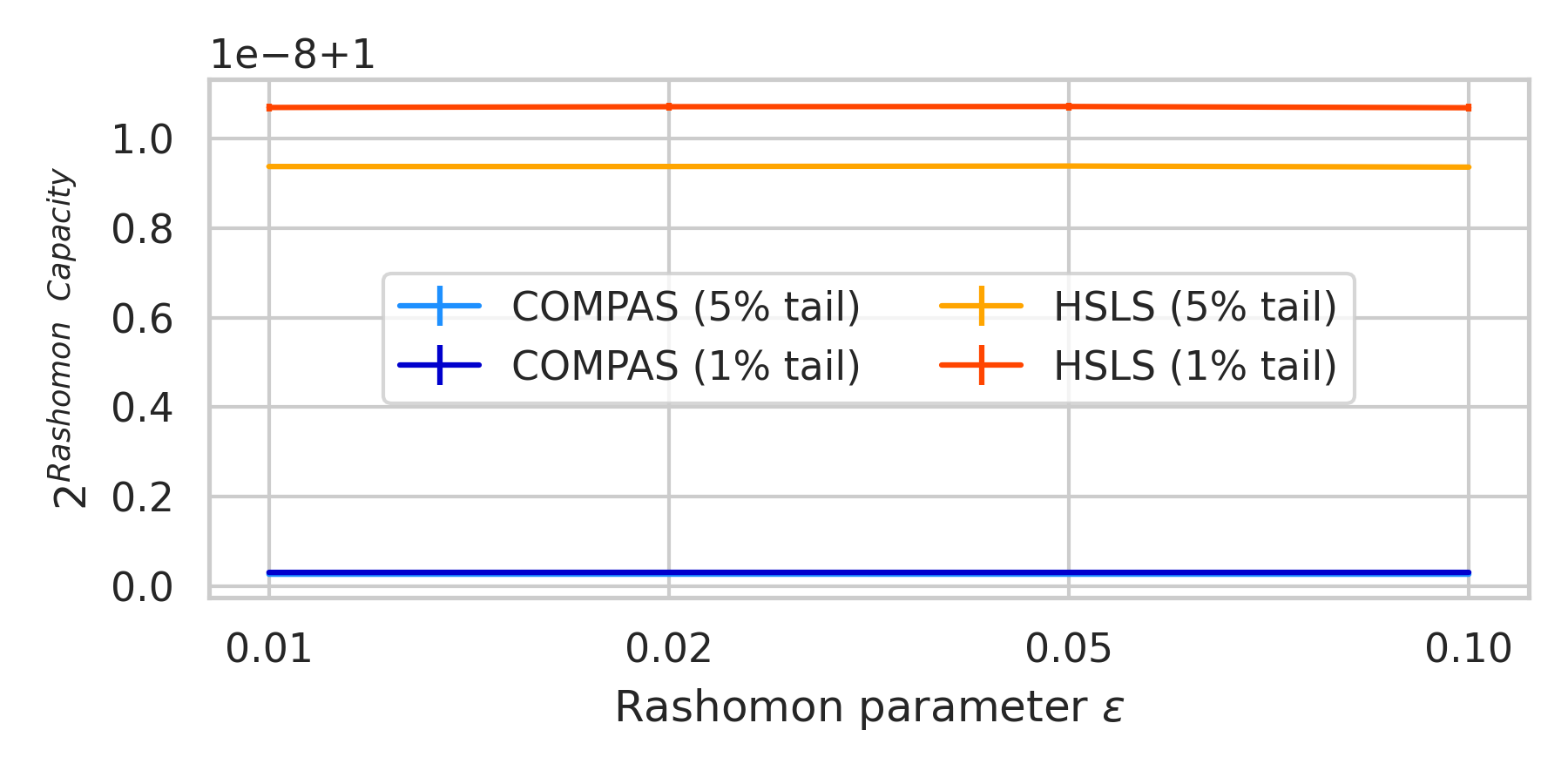}
         \caption{Fast Gradient Sign Method (FSGM)}
     \end{subfigure}
    \caption{Other methods to explore the Rashomon sets.}
    \label{fig:explore-rashomon}
\end{figure}

\subsection{Evaluating Rashomon Capacity in the decision domain}\label{appendix:rc-decision}
In Figure~\ref{fig:cifar10-score-decision}, we demonstrate that Rashomon Capacity can be evaluated with both scores and decisions generated from 100 models in the Rashomon set. 
Decision-based Rashomon Capacity has integer values, indicating the number of classes that are confused for each sample. 

\begin{figure}[!tb]
\centering
\includegraphics[width=\textwidth]{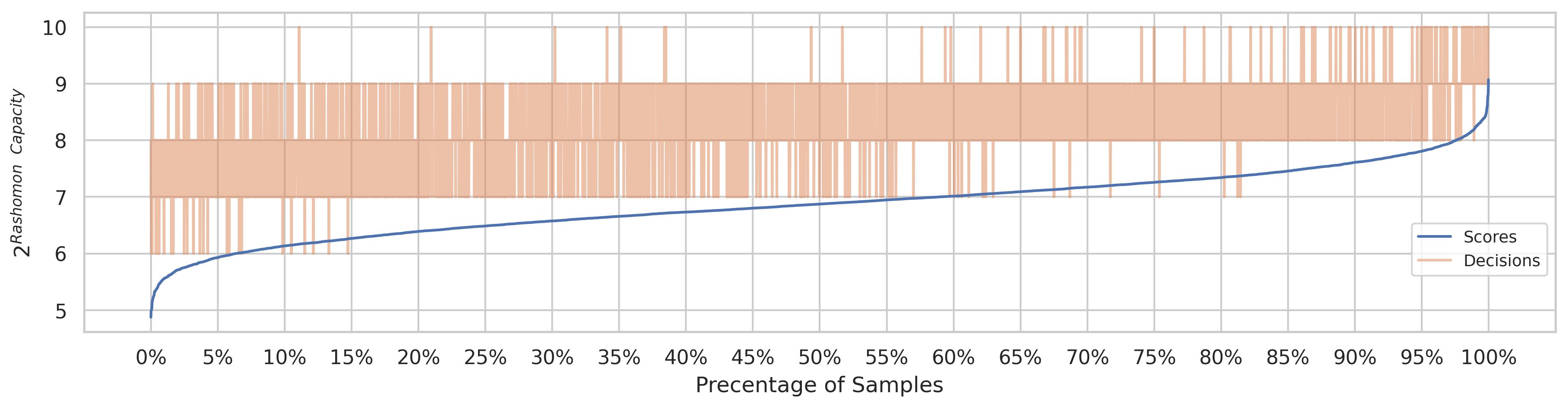}
\caption{Rashomon Capacity of samples in CIFAR-10 in score and decision domains. The samples are sorted in increasing order of score-based RC. For each sample, we then plot the RC using the corresponding thesholded scores.}
\label{fig:cifar10-score-decision}
\end{figure}

\subsection{Predictive multiplicity scores based on predicted classes: ambiguity and discrepancy}\label{appendix:ambiguity-discrepancy}

The computation of the ambiguity and discrepancy in \eqref{eq:ambiguity-discrepancy} requires searching over the entire Rashomon set, which is computationally infeasible when the hypothesis space is composed of neural networks. 
However, we can restrict the search in the entire Rashomon set to the sampled Rashomon set, and approximate the ambiguity and discrepancy.
In Fig.~\ref{fig:adult-compas-hsls-amb-dis}, we report both ambiguity and discrepancy of the 100 sampled models used to produce Fig~\ref{fig:capacity-profiles} for UCI Adult, COMPAS, and HSLS datasets.
Note that both ambiguity and discrepancy report high predictive multiplicity.
For example, in COMPAS dataset, 38\% of the samples can be assigned conflicting predictions by switching between classifiers with test loss difference less than 0.05, and in HSLS dataset, the proportion goes to 50\%.
The reason for such a high predictive multiplicity measured by the ambiguity and discrepancy is that the classifiers, despite having high accuracy, often produce similar scores across the classes for most of the samples.

\begin{figure}[!tb]
\centering
\includegraphics[width=\textwidth]{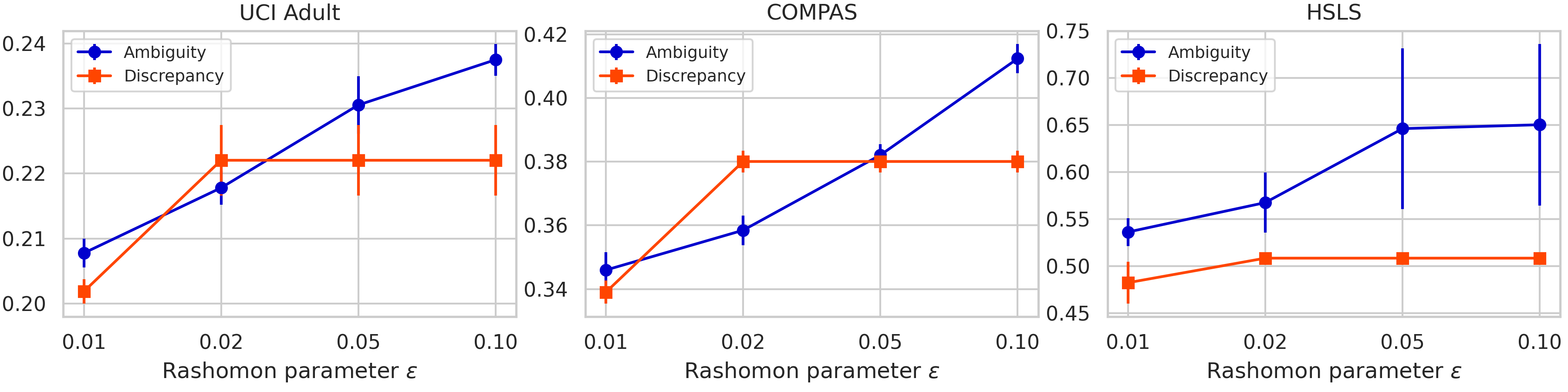}
\caption{Ambiguity and discrepancy of UCI Adult, COMPAS, and HSLS datasets.}
\label{fig:adult-compas-hsls-amb-dis}
\end{figure}

\subsection{Rashomon Capacity with ensemble methods and calibrated classifiers}\label{appendix:ensemble-calibration}
We implement Platt scaling \citep{platt1999probabilistic} to calibrate the scores produced by classifiers (decision tree and random forest classifiers), and compare the Rashomon Capacity with and without calibration in Table~\ref{table:calibration}.
We observe that with calibration, the Rashomon Capacity are slightly reduced when using decision classifiers, and slightly increased when using random forest classifiers (in UCI Adult and HSLS datasets). 
It indicates that model calibration could be a (partial) solution to reduce multiplicity, and is an interesting future direction.

We would like to further clarify that if a perfectly calibrated classifier assigns a 50\% score to a sample (e.g., in binary classification), it does not necessarily mean that this sample has high multiplicity. A perfectly calibrated classifier is one whose predicted classes matches the true classes on average across samples (e.g., samples predicted to be 50\% of one class have true outcomes matching that class 50\% of the time). However, this does not necessarily translate to a (in)consistent set of predictions for a single target sample across equally calibrated classifiers. It may be the case that all calibrated models drawn from the Rashomon Set assign the same 50\% probability for that sample (no multiplicity). Conversely, some models may assign higher and lower confidence for that sample (high multiplicity) yet, on average, still be well-calibrated. Again, this happens because calibration (like accuracy) is an average metric across all samples.

we observe that a random forest classifier leads to a smaller Rashomon Capacity when compared to a decision tree. This is likely due to random forests being an ensemble method. This is in line with two observations: (i) loss functions are often convex, so convex combinations of classifiers will not increase loss, and (ii) score variation (as measured by RC) is captured by at most $c$ models, so a small number of models can reflect multiplicity across the whole Rashomon set. Ensembling is a viable strategy for resolving multiplicity in small models, but may be infeasible for large, computationally expensive models (e.g., neural networks). We will include ensembling as a promising strategy in the Final Remark.

\begin{table}[!t]
  \caption{Rashomon Capacity with and without model calibration.}
  \label{table:calibration}
  \centering
  \begin{tabular}{cccccc}
    \toprule
    & & \multicolumn{2}{c}{Rashomon Capacity} & \multicolumn{2}{c}{Rashomon Capacity (Cali.)}                  \\
    \cmidrule(r){3-4}\cmidrule(r){5-6}
    Dataset     & Classifier     & $5\%$ tail & $1\%$ tail & $5\%$ tail & $1\%$ tail \\
    \midrule
    \multirow{2}{*}{UCI Adult} & Decision Tree        & $1.37\pm 0.16$ & $1.72\pm 0.02$ & $1.38\pm 0.16$ & $1.68\pm 0.01$\\
                               & Random Forest        & $1.00\pm 0.00$ & $1.01\pm 0.00$ & $1.03\pm 0.03$ & $1.07\pm 0.02$\\
    \multirow{2}{*}{COMAS}     & Decision Tree        & $1.04\pm 0.00$ & $1.05\pm 0.00$ & $1.00\pm 0.00$ & $1.00\pm 0.00$\\
                               & Random Forest        & $1.00\pm 0.00$ & $1.00\pm 0.00$ & $1.00\pm 0.00$ & $1.00\pm 0.00$\\
    \multirow{2}{*}{HSLS}      & Decision Tree        & $1.22\pm 0.02$ & $1.25\pm 0.02$ & $1.19\pm 0.02$ & $1.21\pm 0.01$\\
                               & Random Forest        & $1.00\pm 0.00$ & $1.00\pm 0.00$ & $1.03\pm 0.01$ & $1.04\pm 0.01$\\                      
    \bottomrule
  \end{tabular}
\end{table}

\subsection{Training without neural networks: UCI Adult, COMPAS, and HSLS datasets}\label{appendix:wo-neural-networks}
We report Rashomon Capacity with learning models that are not neural networks; particularly, we adopt decision tree/ random forest classifiers, and logistic classifiers with no, $\ell_1$, $\ell_2$ and elastic net regularizations, trained with UCI Adult, COMPAS, and HSLS datasets.
We sampled 100 classifiers for each model, and report the distribution of Rashomon Capacity among the samples from Fig.~\ref{fig:adult-dt} to Fig.~\ref{fig:hsls-lrl2} (UCI Adult: Fig.~\ref{fig:adult-dt} - Fig.~\ref{fig:adult-lrl2}; COMPAS: Fig.~\ref{fig:compas-dt} - Fig.~\ref{fig:compas-lrl2}; HSLS: Fig.~\ref{fig:hsls-dt} - Fig.~\ref{fig:hsls-lrl2}).

We observe that for all datasets, Rashomon Capacity is significantly reduced with random forest classifiers, comparing to the decision tree classifiers, i.e., predictive multiplicity is alleviated by ensemble methods with a multitude of decision trees methods. 
On UCI Adult and HSLS datasets, we observe that regularization for logistic regression could also reduce Rashomon Capacity.
These preliminary numerical results could serve as future directions on the study of reducing predictive multiplicity via ensemble methods and weight regularization.

\clearpage
\begin{figure}[!tb]
\centering
\includegraphics[width=.8\textwidth]{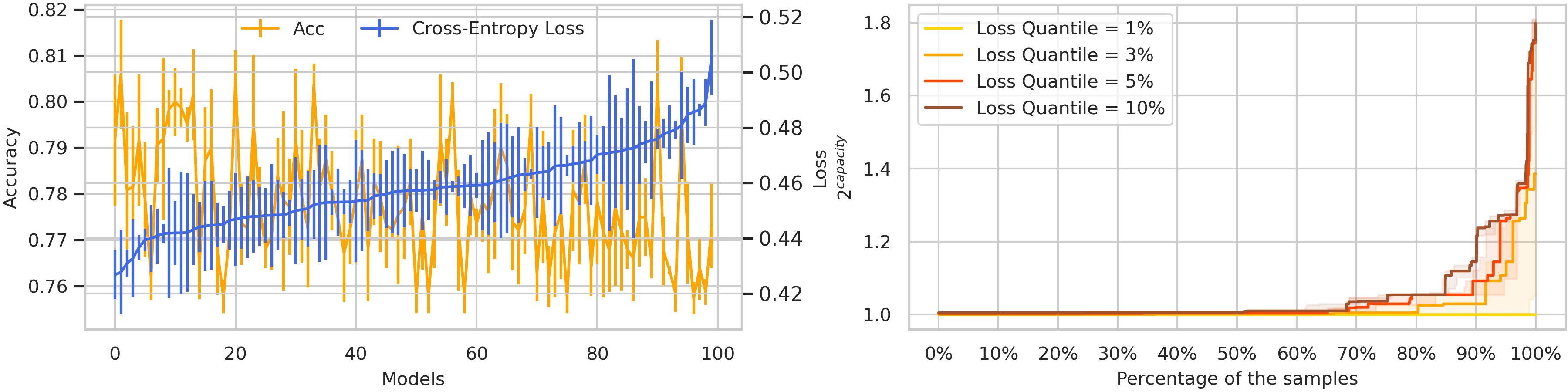}
\caption{UCI Adult dataset with decision tree classifiers.}
\label{fig:adult-dt}
\end{figure}

\begin{figure}[!tb]
\centering
\includegraphics[width=.8\textwidth]{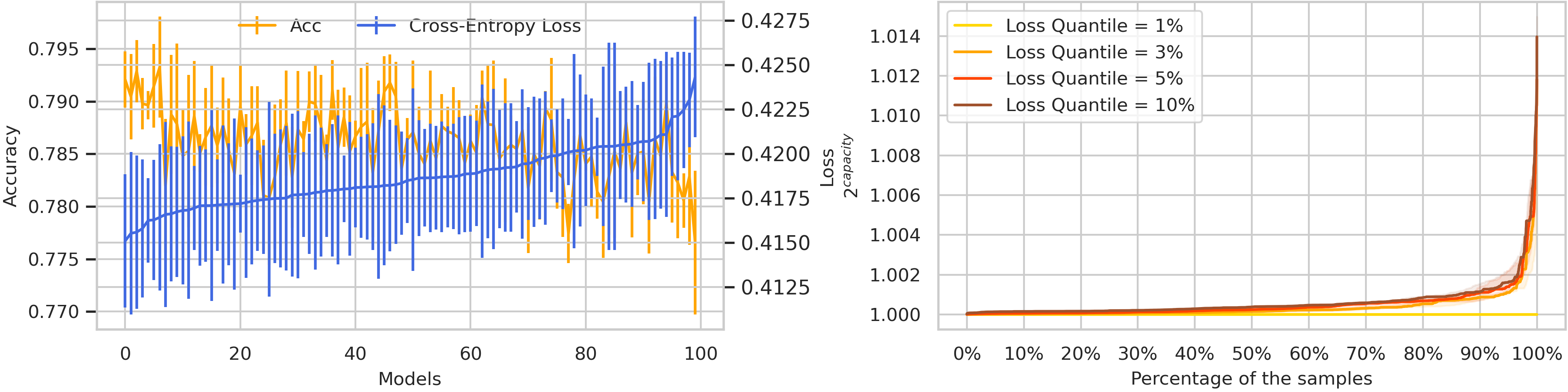}
\caption{UCI Adult dataset with random forest classifiers.}
\label{fig:adult-rf}
\end{figure}

\begin{figure}[!tb]
\centering
\includegraphics[width=.8\textwidth]{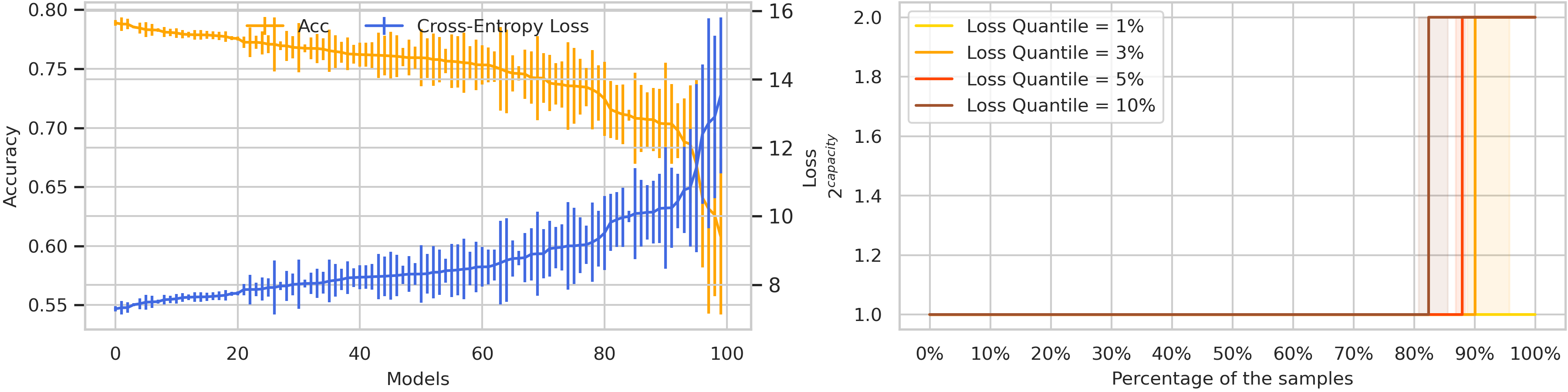}
\caption{UCI Adult dataset with logistic regression and no regularization.}
\label{fig:adult-lr}
\end{figure}

\begin{figure}[!tb]
\centering
\includegraphics[width=.8\textwidth]{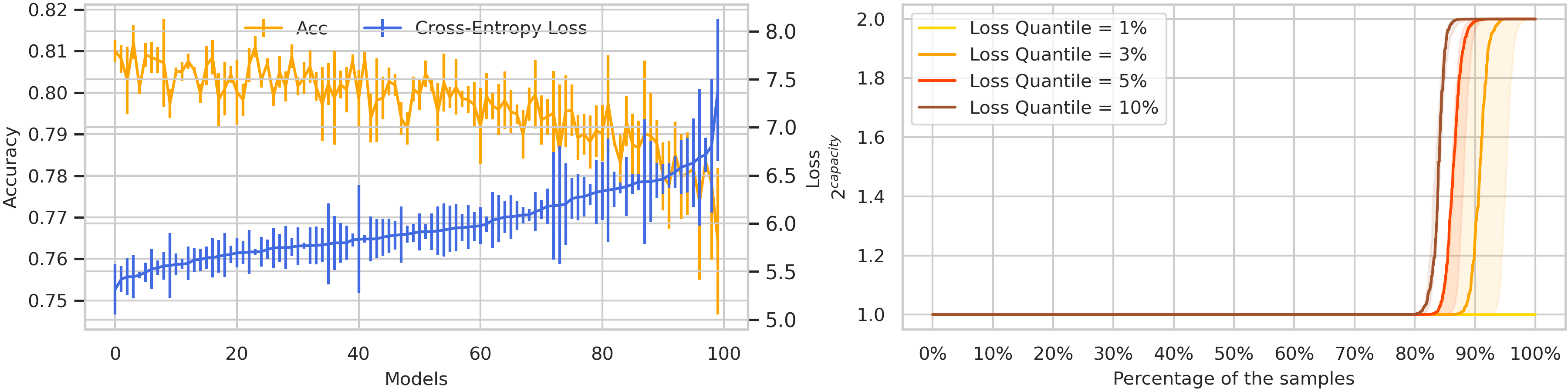}
\caption{UCI Adult dataset with logistic regression and $\ell_1$-regularization.}
\label{fig:adult-lrl1}
\end{figure}

\begin{figure}[!tb]
\centering
\includegraphics[width=.8\textwidth]{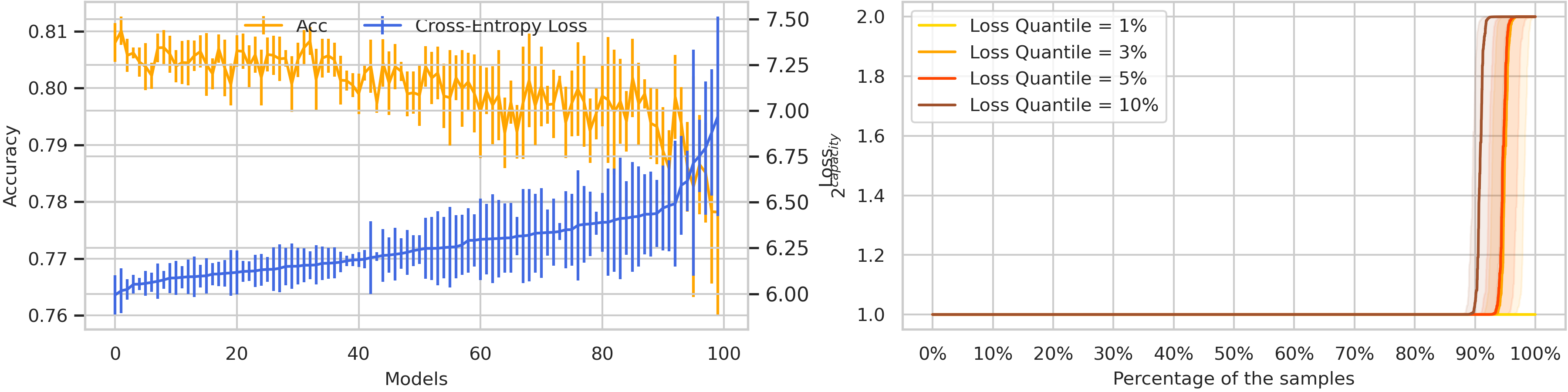}
\caption{UCI Adult dataset with logistic regression and $\ell_2$-regularization.}
\label{fig:adult-lrl2}
\end{figure}

\clearpage
\begin{figure}[!tb]
\centering
\includegraphics[width=.8\textwidth]{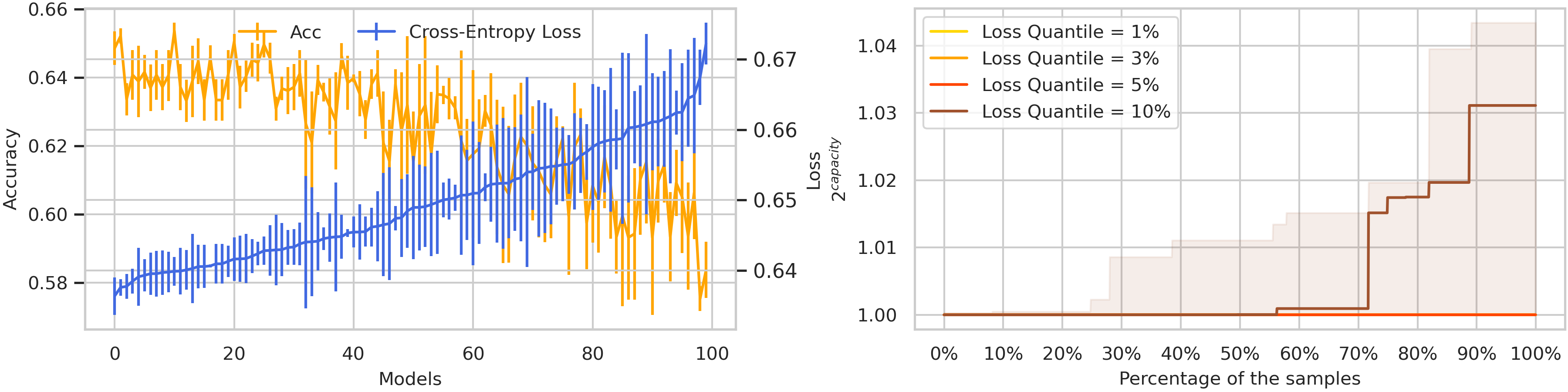}
\caption{COMPAS recidivism dataset with decision tree classifiers.}
\label{fig:compas-dt}
\end{figure}

\begin{figure}[!tb]
\centering
\includegraphics[width=.8\textwidth]{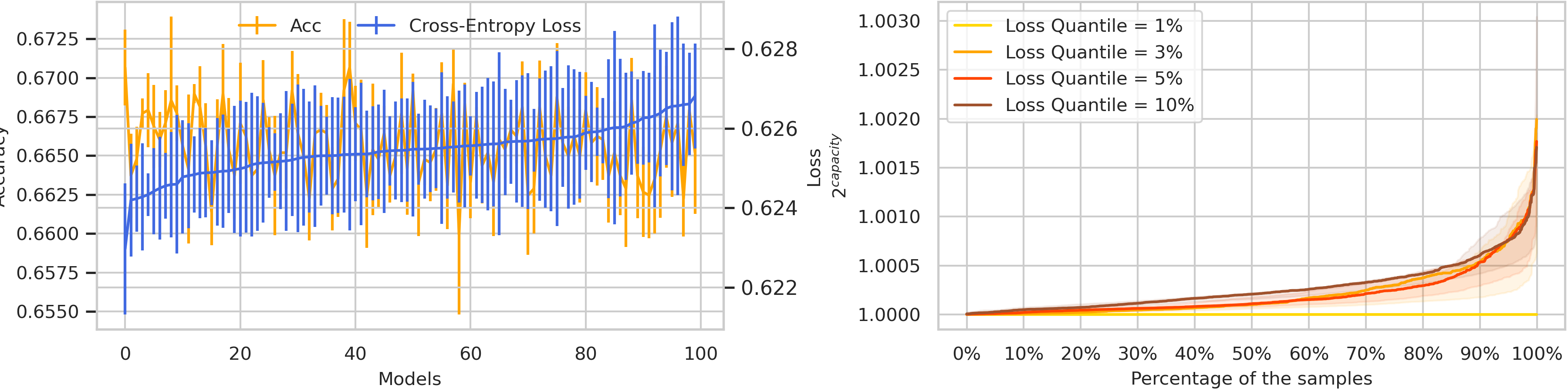}
\caption{COMPAS recidivism dataset with random forest classifiers.}
\label{fig:compas-rf}
\end{figure}

\begin{figure}[!tb]
\centering
\includegraphics[width=.8\textwidth]{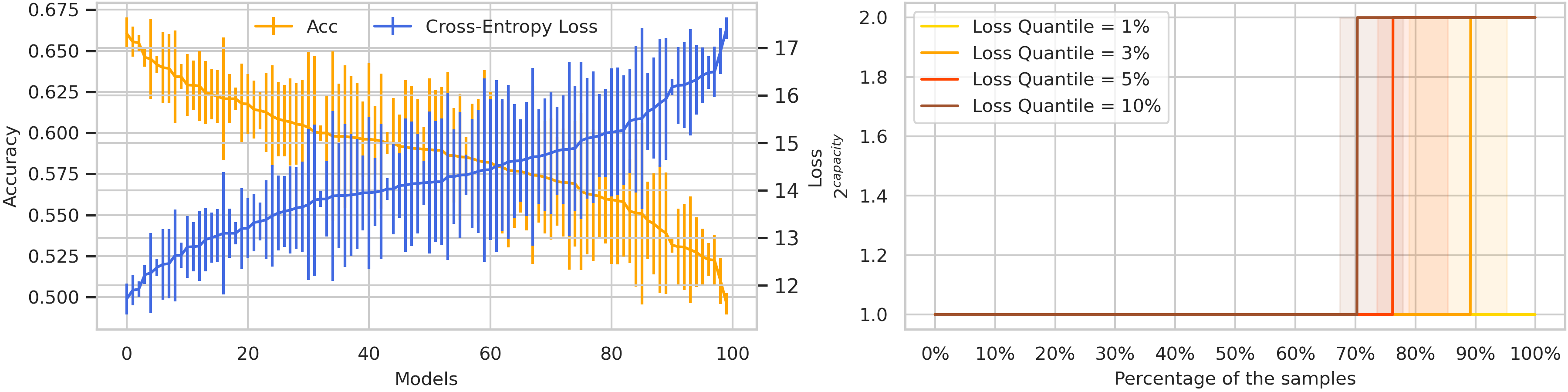}
\caption{COMPAS recidivism dataset with logistic regression and no regularization.}
\label{fig:compas-lr}
\end{figure}

\begin{figure}[!tb]
\centering
\includegraphics[width=.8\textwidth]{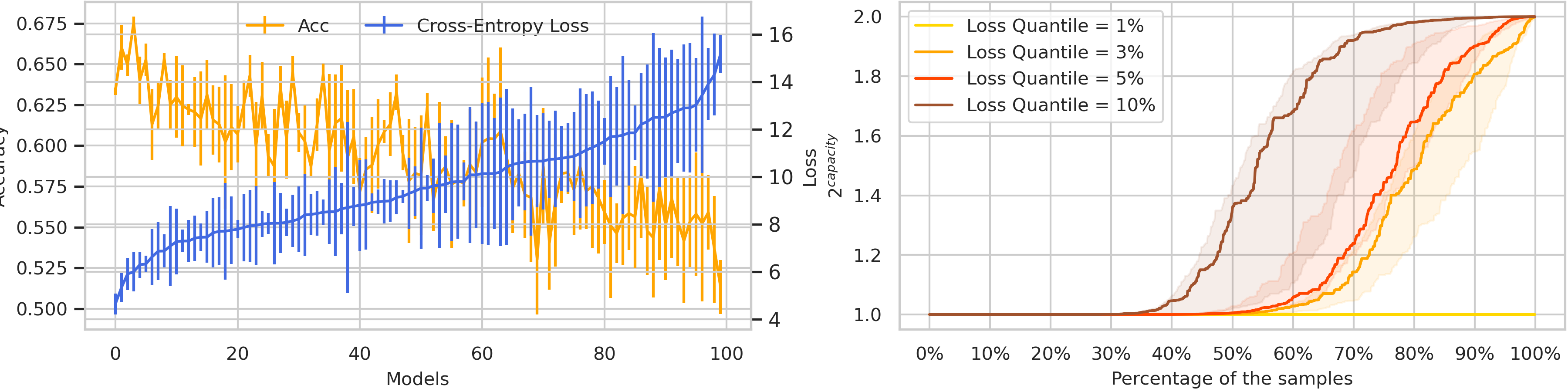}
\caption{COMPAS recidivism dataset with logistic regression and $\ell_1$-regularization.}
\label{fig:compas-lrl1}
\end{figure}

\begin{figure}[!tb]
\centering
\includegraphics[width=.8\textwidth]{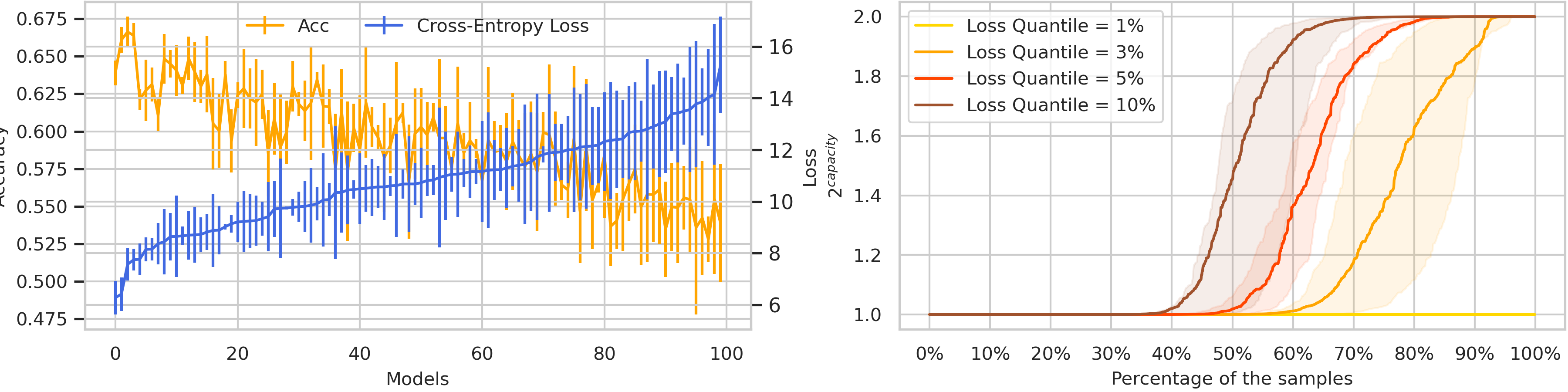}
\caption{COMPAS recidivism dataset with logistic regression and $\ell_2$-regularization.}
\label{fig:compas-lrl2}
\end{figure}

\clearpage
\begin{figure}[!tb]
\centering
\includegraphics[width=.8\textwidth]{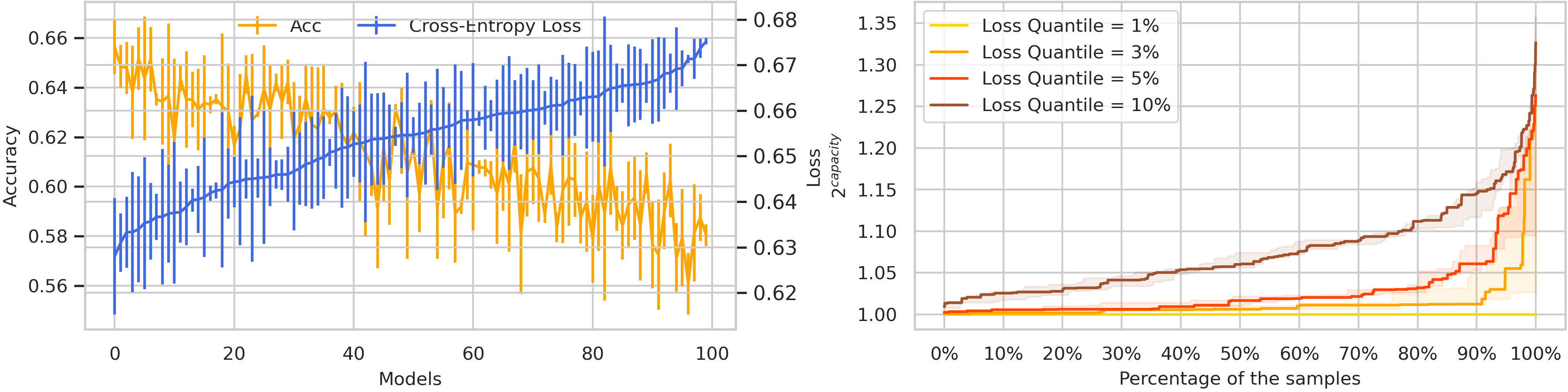}
\caption{HSLS dataset with decision tree classifiers.}
\label{fig:hsls-dt}
\end{figure}

\begin{figure}[!tb]
\centering
\includegraphics[width=.8\textwidth]{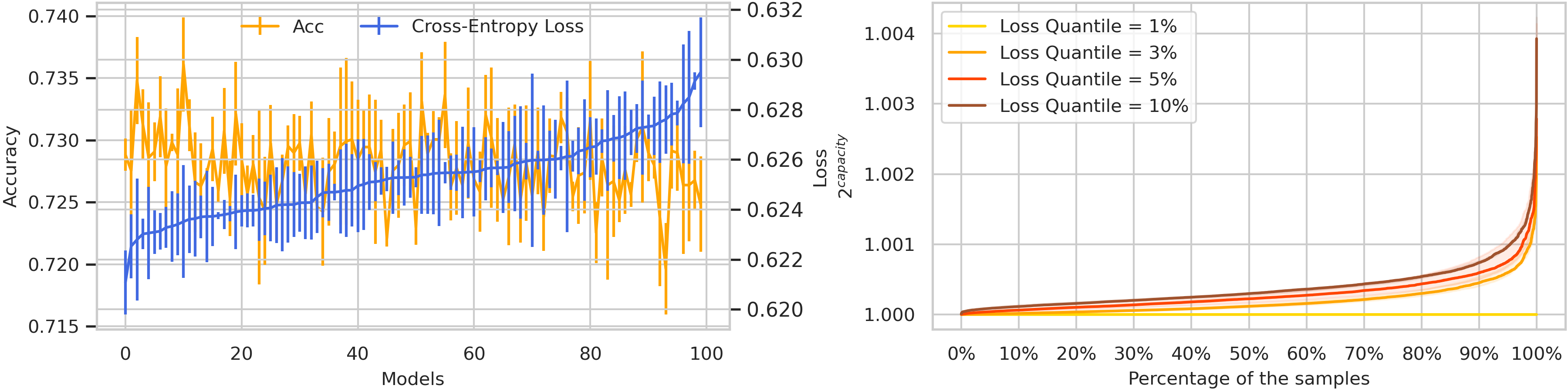}
\caption{HSLS dataset with random forest classifiers.}
\label{fig:hsls-rf}
\end{figure}

\begin{figure}[!tb]
\centering
\includegraphics[width=.8\textwidth]{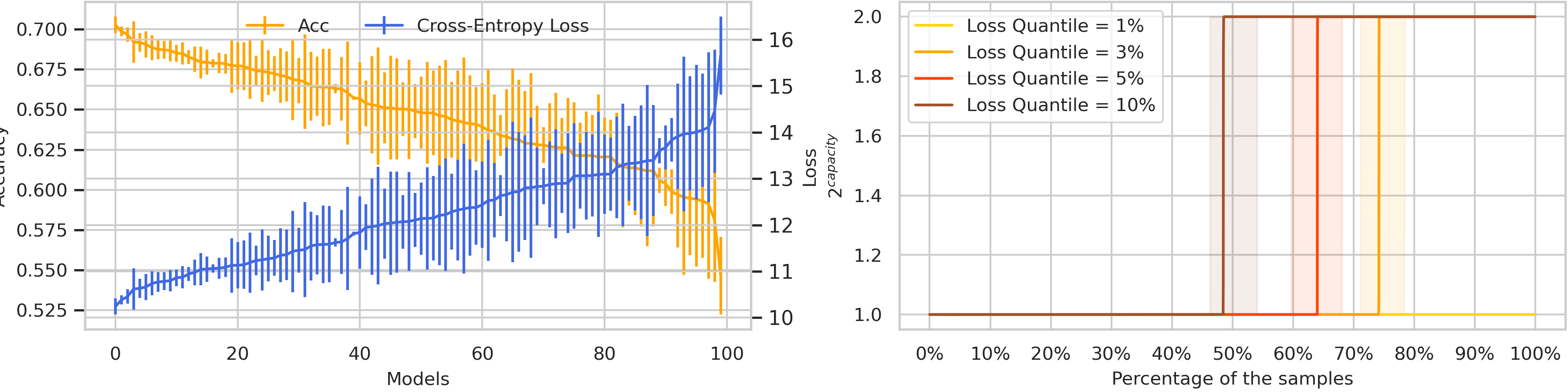}
\caption{HSLS dataset with logistic regression and no regularization.}
\label{fig:hsls-lr}
\end{figure}

\begin{figure}[!tb]
\centering
\includegraphics[width=.8\textwidth]{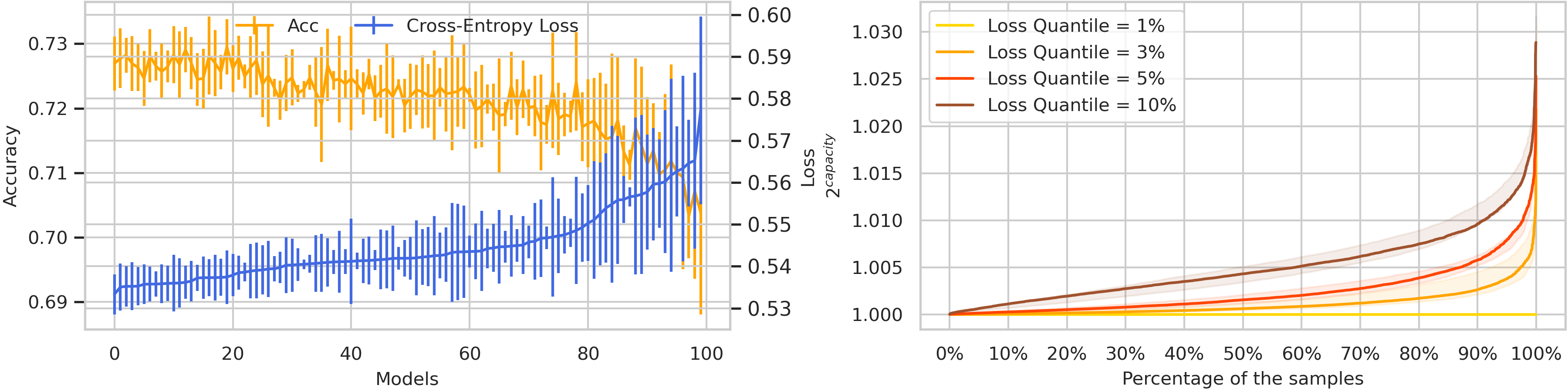}
\caption{HSLS dataset with logistic regression and $\ell_1$-regularization.}
\label{fig:hsls-lrl1}
\end{figure}

\begin{figure}[!tb]
\centering
\includegraphics[width=.8\textwidth]{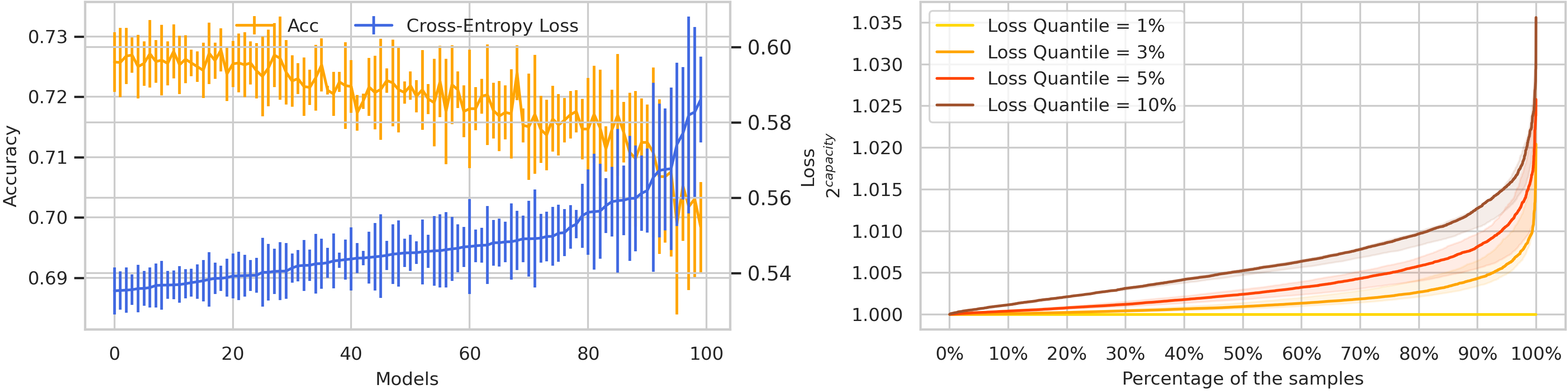}
\caption{HSLS dataset with logistic regression and $\ell_2$-regularization.}
\label{fig:hsls-lrl2}
\end{figure}

\end{document}